\newif\ifpreprint
\newif\ifshowackandcontribs
\def\eqref#1{equation~\ref{#1}}
\def\rvz{{\mathbf{z}}}
\def\vw{{\bm{w}}}
\def\mH{{\bm{H}}}
\DeclareMathAlphabet{\mathsfit}{\encodingdefault}{\sfdefault}{m}{sl}
\SetMathAlphabet{\mathsfit}{bold}{\encodingdefault}{\sfdefault}{bx}{n}
\newcommand{\E}{\mathbb{E}}
\newcommand{\R}{\mathbb{R}}
\newcommand{\Cov}{\mathrm{Cov}}
\DeclareMathOperator{\CIF}{IF}  
\DeclareMathOperator{\BIF}{BIF}  
\newcommand{\vbeta}{\bm{\beta}}  
\newcommand{\tran}{^\top}   
\newcommand{\localization}{\gamma}
\newtcbox{\tokenbox}{%
  fontupper=\ttfamily,
  colback=gray!10,
  boxrule=0pt,             
  arc=2pt,
  boxsep=0pt,
  frame empty,
  left=2pt,
  right=2pt,
  top=2pt,                 
  bottom=2pt,              
  nobeforeafter,
  valign=center,
  baseline,
  tcbox raise base,
  verbatim,                
  before upper={\vphantom{Äg}},
}
\newcommand{\loss}[1][]{%
  \ifthenelse{\equal{#1}{}}%
    {\ell}
    {\ell_{#1}}
}
\newcommand{\dataset}{\mathcal{D}}
\newcommand{\bbeta}{\boldsymbol\beta}
\newcommand{\bmu}{\boldsymbol\mu}
\newcommand{\bxi}{\boldsymbol\xi}
\newcommand{\wmin}{{\vw^{\ast}}}
\newcommand{\Hessian}[1][]{%
  \ifthenelse{\equal{#1}{}}%
    {\mH}
    {\mH({#1})}
}
\def\1{\bm{1}}
\newcommand{\EE}{\mathbb{E}}
\newtheorem{prop}{Proposition}
\crefname{prop}{Proposition}{Propositions}
\Crefname{prop}{Proposition}{Propositions}
\crefname{remark}{Remark}{Remarks}
\Crefname{remark}{Remark}{Remarks}
\crefname{conjecture}{Conjecture}{Conjectures}
\Crefname{conjecture}{Conjecture}{Conjectures}
\title{The Loss Kernel: A Geometric Probe for Deep Learning Interpretability}
\author{
  Maxwell Adam\thanks{Equal contribution.} \\
  University of Melbourne \\
  Timaeus \\
  \texttt{max@timaeus.co} \\
  \And
  Zach Furman\footnotemark[1] \\
  University of Melbourne \\
  \texttt{zach.furman1@gmail.com} \\
  \And
  Jesse Hoogland \\
  Timaeus \\
  \texttt{jesse@timaeus.co} \\
}
\begin{document}

\maketitle

\begin{abstract}
We introduce the loss kernel, an interpretability method for measuring similarity between data points according to a trained neural network. The kernel is the covariance matrix of per-sample losses computed under a distribution of low-loss-preserving parameter perturbations. We first validate our method on a synthetic multitask problem, showing it separates inputs by task as predicted by theory. We then apply this kernel to Inception-v1 to visualize the structure of ImageNet, and we show that the kernel's structure aligns with the WordNet semantic hierarchy. This establishes the loss kernel as a practical tool for interpretability and data attribution.
\end{abstract}

\section{Introduction}\label{sec:introduction}

A central goal in AI interpretability and data attribution is interpreting and mapping the global structure of the data distribution as seen by a trained neural network~\citep{carter2019activation,lehalleur2025you,olah2015visualizing}. One approach is to start local, by quantifying a suitable measure of similarity between pairs of individual samples---that is, by defining a kernel. ``Interpreting" the global structure of the data distribution then becomes a problem of analyzing the geometric structure in this kernel (e.g., via clustering techniques), and ``mapping” becomes a problem of visualizing points in this kernel space (e.g., via dimensionality reduction techniques). 

This kernel-based approach has been used successfully with similarity measures derived from activations or representations. For example, it is possible to define a kernel via cosine similarity between the hidden vectors of sparse autoencoders (SAEs). Applying UMAP to this kernel provides a way to visualize the space of features in language models \citep{bricken2023monosemanticity, templeton2024scaling} and image models \citep{Gorton2024}. This kernel has also been used for analysis, such as to determine the (nearly) hierarchical relations between features \citep{bricken2023monosemanticity}.

In this paper, we take a different approach derived from the geometric structure of the loss landscape. Neural networks are \textit{singular} models, meaning many different parameter vectors encode identical functions and achieve the same loss. Rather than studying individual weight settings, singular learning theory (SLT; \citealt{watanabe2009algebraic}), which studies these singular models, suggests analyzing the entire set of low-loss solutions. This perspective motivates us to define the \textbf{loss kernel}, a measure of functional similarity based on shared sensitivity to parameter perturbations restricted to this low-loss set of solutions. Formally, the loss kernel, $K(\rvz, \rvz')$, is given by the covariance matrix of per-sample losses, $\text{Cov}[\ell(\rvz; \vw), \ell(\rvz'; \vw)]$, under perturbations drawn from a suitable probe distribution. A high covariance value indicates that the inputs $\rvz$ and $\rvz'$ share sensitivity to the same parameter perturbations, which provides evidence for two samples being functionally coupled inside a given model.

\begin{figure*}[p]
    \centering

    \ifpreprint
     \includegraphics[width=.85\linewidth]{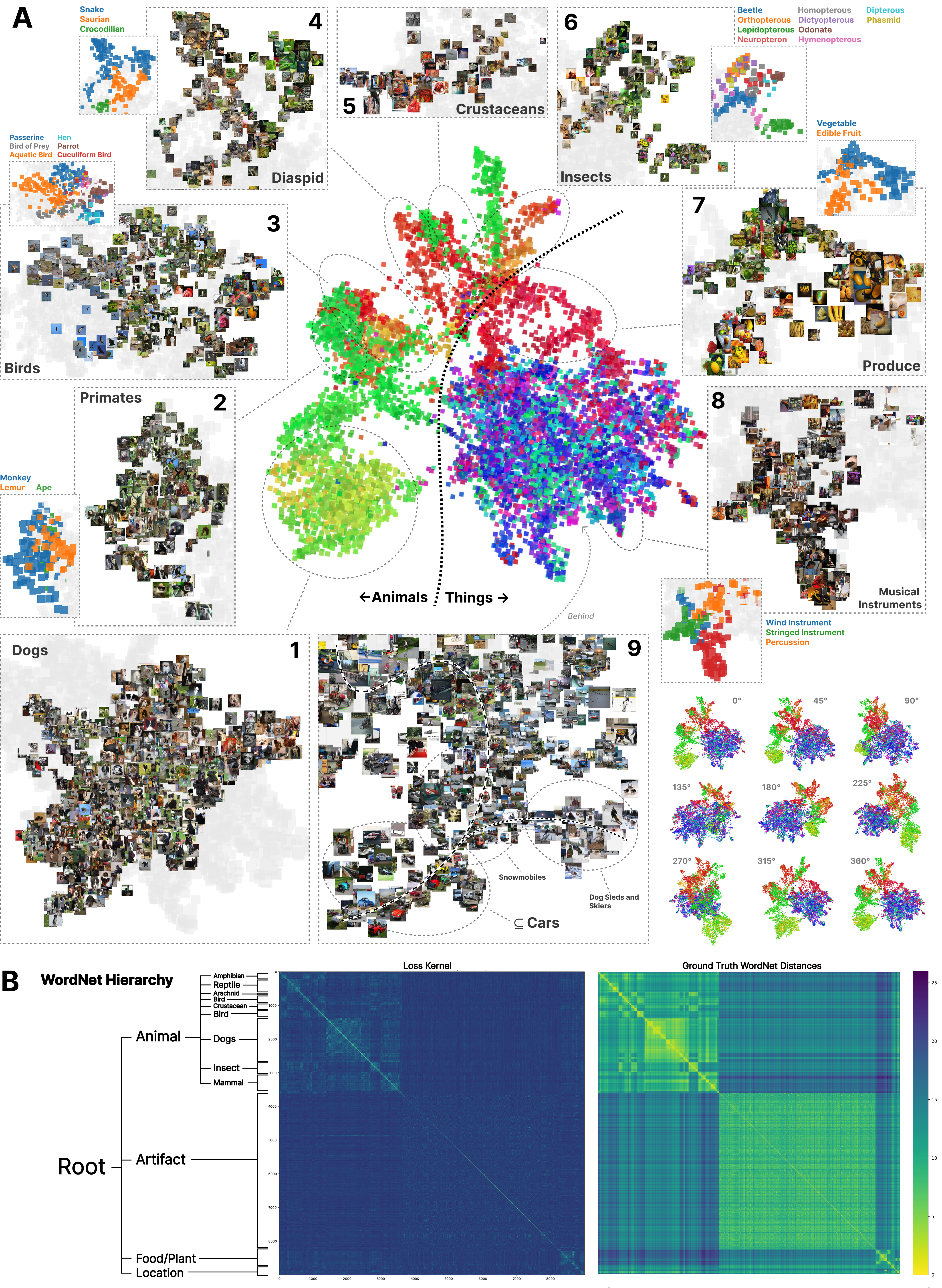}
     \else
    \includegraphics[width=\linewidth]{figures/singfluence_2_figures/big-fig.png}
     \fi
    \caption{
        \textbf{Geometry of the loss kernel for Inception-v1 on ImageNet.}
        \textbf{A} UMAP of pairwise distances induced by the normalized loss kernel
        \(R(\rvz,\rvz')=\mathrm{Corr}_{\vw\sim p(\vw\mid {\dataset})}[\ell(\rvz;\vw),\,\ell(\rvz';\vw)]\) for Inception-v1 on ImageNet-1k; each point is one image, colored continuously by position in the ImageNet hierarchy. Similar colors indicate inputs are semantically similar. 
        \textbf{1--9} Insets: example neighborhoods with thumbnails showing coherent regions for \emph{dogs} (1), \emph{primates} (2), \emph{birds} (3), \emph{diaspids} (4), \emph{crustaceans} (5), \emph{insects} (6), \emph{produce} (7), \emph{musical instruments} (8), and \emph{vehicles/cars} (9).
        Bottom right: Orbit views of the same 3-D embedding.
        \textbf{B} The full correlation kernel matrix (10k$\times$10k) next to the ground truth distance matrix derived from the ImageNet hierarchy shows similar block structures in both. 
    }
    \label{fig:big-fig}
\end{figure*}

We demonstrate the loss kernel as a practical interpretability technique by combining it with established kernel-based techniques to study two settings. First, in a controlled experiment using a synthetic multitask arithmetic problem, we confirm that the kernel successfully separates inputs corresponding to functionally independent subtasks, as predicted by theory. Second, we apply the loss kernel to an Inception-v1 model to create a visual map (\cref{fig:big-fig}) of the ImageNet dataset on which it was trained \citep{szegedy_going_2014,deng2009imagenet}. We then quantitatively validate that the structure of this kernel reveals a coherent semantic organization that is consistent with the WordNet class taxonomy~\citep{princeton2010wordnet}.

\ifshowackandcontribs
\else
\newpage
\fi

\paragraph{Contributions.} Our contributions are thus:
\begin{itemize}
\item \textbf{We introduce the loss kernel as a measure of functional coupling}, motivating it from the geometric perspective of singular learning theory and defining it through a principled, local probe distribution. (\cref{sec:theory})
\item \textbf{We validate the loss kernel in a controlled setting}, confirming that the loss kernel is able to successfully separate subtasks in a synthetic multitask experiment, as predicted theoretically. (\cref{sec:validation})
\item \textbf{We apply the loss kernel to Inception-v1 on ImageNet}, demonstrating its utility as a large-scale interpretability and visualization tool. We show that its structure reveals a coherent semantic organization consistent with the WordNet class taxonomy. (\cref{sec:hierarchy})
\end{itemize}

\section{The Loss Kernel}\label{sec:theory}

\begin{wrapfigure}{r}{0.5\textwidth}
    \vspace{-32pt}  
    \centering
    \includegraphics[width=0.5\textwidth]{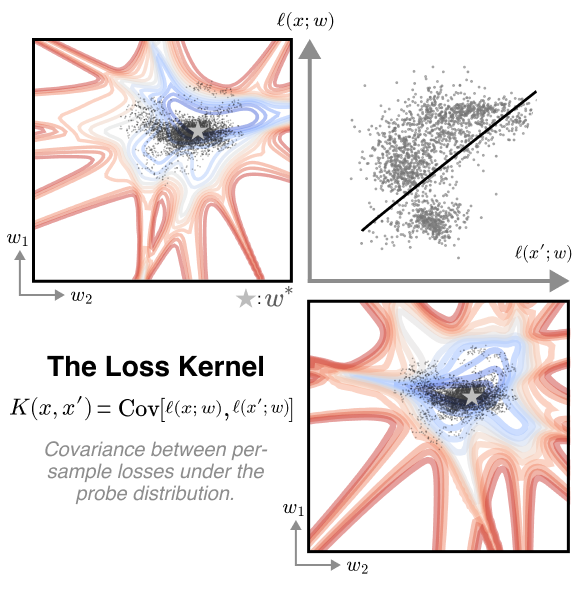}
    \caption{\textbf{The loss kernel.} 
    The loss kernel $K(\rvz,\rvz')$ is the covariance of per-sample losses $\ell(\rvz,\vw)$ for two inputs $\rvz$ and $\rvz'$, computed over a \textit{probe distribution} of model weights $\vw$ (gray points) sampled near a trained solution $\wmin$. These two losses respond differently to different weights (top left, bottom right), reflecting which parts of the model are important for those inputs. A positive correlation in these losses (scatter plot, top right) signifies that the two inputs share sensitivity to the same weight perturbations, which we interpret as evidence that the model is treating the inputs $\rvz$ and $\rvz'$ similarly.
}
    \label{fig:loss-kernel-pedagogical}
    \vspace{-52pt}  
\end{wrapfigure}

In this section, we define the loss kernel, a metric that quantifies whether two inputs are processed similarly by a trained neural network. First (\cref{sec:degeneracy}), we motivate our focus on the geometry of the loss landscape, specifically the set of low-loss points $W_\epsilon$ that contains a given trained model $\wmin$.
Second (\cref{sec:probe-distribution}), we develop a practical \textit{probe distribution} using a localized Gibbs posterior, which allows us to sample from this low-loss region. 
Finally (\cref{sec:loss-kernel}), using this distribution, we formally define the \textit{loss kernel} as the covariance of per-sample losses under our probe distribution.

\subsection{Interpretability and Degeneracy}\label{sec:degeneracy}

The typical process of training a neural network yields a single parameter vector $\wmin$, optimized via an algorithm like SGD against an objective function of the form
\begin{equation*}
L_n(\vw) = \sum_{i=0}^n \ell(\rvz_i;\vw)
\end{equation*}
where $\ell(\rvz_i;\vw)$ is the loss on $i$-th data sample $\rvz_i$ for the parameter vector $\vw$, with a dataset of size $n$. 

The field of \textit{interpretability} seeks to understand the structure of the trained model represented by $\wmin$. It is typically implicitly presumed that one can understand the structure in the trained model using the parameters $\wmin$, either directly by inspecting, for example, weight magnitudes \citep{kovaleva_bert_2021}, or indirectly by examining the computation process of the model at $\wmin$ through, for example, activations \citep{bricken2023monosemanticity,wang_interpretability_2022,carter2019activation} and gradients \citep{ancona_towards_2018,sundararajan_axiomatic_2017}.

A challenge to interpreting weights directly is that neural networks are \textit{singular}: many different parameters implement the same function or achieve the same loss. This degeneracy means that properties specific to $\wmin$ may reflect arbitrary details of the learned implementation that are irrelevant to downstream behavior. For example, ReLU-scaling symmetries mean the absolute magnitude of individual weights or gradients is not always meaningful on its own, which undermines interpretability methods that rely on it.

\paragraph{Singular learning theory.} 
Watanabe's (\citeyear{watanabe2009algebraic}) singular learning theory (SLT) provides a mathematical framework for studying models that exhibit such degeneracies. A key idea from SLT is to study the geometry of the set of minima of the loss function as a whole rather than individual weight settings. 
Consider the set of parameters which are ``almost equivalent"  to $\wmin$, according to the training loss $L_n(\vw)$:\footnote{Throughout the paper, we define objects using the training loss $L_n(\vw)$, including the loss kernel itself. Alternatively, we could define these objects using the population loss $L(\vw)=\lim_{n\rightarrow\infty} L_n(\vw)$, and treat the empirical versions as \textit{estimators} of the population versions. We explore this further in \cref{appendix:population_kernel}.}
\begin{equation}
\label{eq:low_loss_set}
W_{\epsilon} = \{\vw \in \mathbb{R}^d \mid L_n(\vw) - L_n(\wmin) < \epsilon\}.
\end{equation}
The asymptotic volume-scaling behavior of (the population version of) $W_\epsilon$ is directly linked, through SLT, to the complexity, description length, and generalization error of the model at $\wmin$ \citep{quantifdegen, urdshals_compressibility_2025}. Our work builds on this premise to develop a principled technique for measuring whether two inputs are processed in similar ways by a given trained neural network.

\FloatBarrier 
\begin{figure}[t]
    \centering
    \includegraphics[width=1\linewidth]{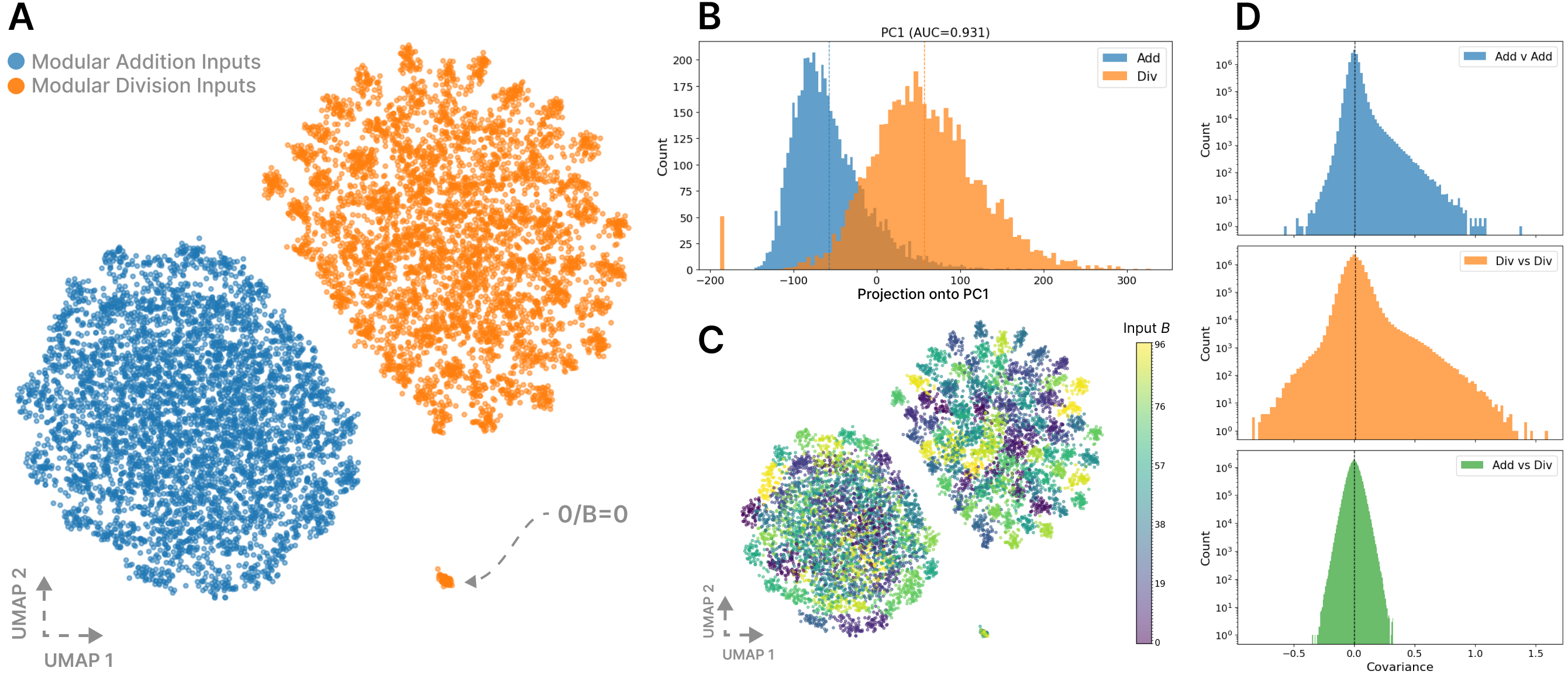}
       \caption{
            \textbf{Geometry of the loss kernel for a multitask modular-arithmetic model ($p=97$).}
            (\textbf{A}) UMAP of pairwise distances derived from the loss kernel 
            ($d(\rvz,\rvz')=1-R(\rvz,\rvz')$. 
            Two well-separated clusters correspond to modular addition (blue) and modular division (orange). 
            A small satellite cluster corresponds to the trivial modular division case $a=0$, for which $0/b \equiv 0 \pmod{97}$.
            (\textbf{B}) Distribution of projections onto the first principal component of the normalized per-sample expected loss vectors, $\mathbb E[\ell(\rvz_i;\vw)]-\ell(\rvz_i; \wmin)$. 
            A single axis suffices to separate tasks (ROC--AUC $=0.931$).
            (\textbf{C}) Same UMAP as in (\textbf{A}), colored by the value of input $b$.
            (\textbf{D}) Log-scaled covariance distributions for Addition\,vs.\,Addition, Division\,vs.\,Division, and Addition\,vs.\,Division pairs. Within-task covariances are heavy-tailed and skewed, whereas cross-task covariances are narrowly concentrated and approximately normal.
        }
         \label{fig:multitask}
\end{figure}

\subsection{Constructing a Practical Probe}\label{sec:probe-distribution}

While $W_\epsilon$ is theoretically natural, it is difficult to integrate over this set because it is so  high-dimensional. Moreover, we need a way to localize this set to a specific set of model weights obtained via stochastic optimization. We make two modifications to overcome these challenges and develop a practical low-loss probe:

\paragraph{From hard to soft constraints.} First, we replace the sharp boundary of $W_\epsilon$ with a smooth Gibbs factor, $\exp(-\beta L_n(\vw))$. This concentrates sampling in low-loss regions, where the inverse temperature $\beta$ plays a role analogous to $1/\epsilon$. This makes the distribution amenable to gradient-based MCMC sampling and is formally justified by the relationship between integrals over low-loss sets and expectations under the Gibbs distribution (see \cref{appendix:coarea}).

\paragraph{From global to local.} Second, we focus on the neighborhood containing the specific model $\wmin$ found by a given run of stochastic optimization. The global loss landscape may contain many regions of low loss, but we wish to interpret the particular solution our training procedure has found. We therefore re-weight the Gibbs distribution with a Gaussian kernel centered at $\wmin$. 

This yields the final probe distribution over the training set $\dataset$:
\vspace{1pt}
\begin{equation*}
p(\vw \mid \dataset) \propto \underbrace{\exp(-\beta L_n(\vw))}_{\text{Low-Loss Constraint}} \cdot \underbrace{\mathcal{N}(\vw \mid \wmin, \gamma^{-1}I)}_{\text{Locality Constraint}}.
\end{equation*}
\vspace{1pt}
From a Bayesian perspective, this is equivalent to a \textit{tempered Bayesian posterior} with Gaussian prior.

\subsection{The Loss Kernel}\label{sec:loss-kernel}

\paragraph{The loss kernel.} The loss kernel, $K$, is the covariance matrix of per-sample losses under our probe distribution:
\begin{equation}
    K(\rvz,\rvz') = \text{Cov}_{\vw \sim p(\vw \mid \dataset)}\left[\ell(\rvz; \vw), \ell(\rvz';\vw)\right].
\end{equation}
A high value of $K(\rvz,\rvz')$ indicates that inputs $\rvz$ and $\rvz'$ are functionally coupled, sharing sensitivity to the same parameter perturbations. The kernel is symmetric positive semi-definite as it is a covariance kernel. For analysis and visualization, we often use its normalized form:
\begin{equation*}
    R(\rvz,\rvz') = \frac{K(\rvz,\rvz')}{\sqrt{K(\rvz,\rvz)K(\rvz',\rvz')}},
\end{equation*}
with $R(\rvz,\rvz') = 0$ if $K(\rvz,\rvz)=0$ or $K(\rvz',\rvz')=0$, which measures the \textit{correlation} between per-sample losses. $R(\rvz,\rvz')$ also has the advantage of being invariant under affine changes of the loss function, unlike $K(\rvz,\rvz')$ itself.

\paragraph{Interpretation.} The loss kernel can be seen as a generalized version of the negated (local) Bayesian Influence Function~\citep{singfluence1}, which itself generalizes the influence function from classical statistics, see \cref{appendix:tda}. The diagonal of this kernel, $K(\rvz,\rvz)$, is the per-sample loss variance. Up to a multiplicative constant, the sum of $K(\rvz, \rvz')$ over the training set, $\sum_{i} K(\rvz_i, \rvz_i)$, is an empirical estimator for the \textit{singular fluctuation}, a key quantity in SLT that governs the model's (Gibbs) generalization error, see \cref{appendix:slt}.

\paragraph{Practical estimation.}
Expectations over the probe distribution $p(\vw\mid \dataset)$ are intractable to compute analytically. We therefore approximate them using Monte Carlo methods. Specifically, we generate a set of $S$ samples $\{w_s\}_{s=1}^{S}$ from a Stochastic Gradient Langevin Dynamics (SGLD; \citealt{welling2011bayesian}) chain (or multiple parallel chains) initialized at the trained model's parameters, $\wmin$. We then use these samples to compute standard unbiased plug-in estimators for the loss kernel $\hat K(\rvz,\rvz')$ and its normalized version $\hat R(\rvz,\rvz')$. We provide further details and departures from SGLD in \cref{appendix:sgmcmc}.


\section{Validation on a Synthetic Task}
\label{sec:validation}

Before using the loss kernel to explore structure in natural data, we first verify that it behaves as expected in a controlled scenario. Theoretically, we expect that for tasks solved by \textit{independent} mechanisms---where the loss factorizes into a sum of sublosses depending on disjoint sets of weights---the cross-task loss covariance is \textit{zero} (\cref{appendix:decoupling}). We test this prediction on a transformer trained on a multitask modular arithmetic problem designed to encourage such independent mechanisms.

\paragraph{Multitask arithmetic.} For our controlled scenario, we analyze a two-layer transformer on a multitask modular arithmetic (``grokking") problem, extending the single-task setup of \citet{power2022grokking}. Our model is trained to perfect accuracy on two independent tasks: modular addition and modular division, both modulo 97. To encourage the development of distinct computational pathways, each operation uses a separate input vocabulary. 

\paragraph{Reducing dimensionality.} We visualize the kernel by applying standard dimensionality reduction techniques to a set of reference points in the kernel space. We use UMAP, which obtains a low-dimensional embedding optimized to preserve nearest-neighbor relationships~\citep{mcinnes2018umap}. 

UMAP operates on a distance matrix, where a point must have distance $0$ with itself and positive distance with all other points. We transform the normalized loss kernel, or correlation, $R$ into a distance $d$ by setting the distance between any two samples $\rvz$ and $\rvz'$ to $d(\rvz,\rvz') = 1 - R(\rvz,\rvz')$. Applying UMAP to these pairwise distances produces the embedding depicted in~\cref{fig:multitask}, where proximity in the visualization indicates a strong functional coupling between samples as measured by the kernel.\footnote{In the ImageNet setting we remove connections between inputs of the same label during UMAPs nearest neighbor search to eliminate potential spurious correlations (see~\cref{appendix:spurious-correlations})} 

\paragraph{Interpreting the kernel.} After computing the loss kernel over all pairs over 10,000 inputs drawn equally from both tasks, we find its structure reflects the task-level separation between addition and division. As seen in the UMAP visualization in \cref{fig:multitask}, the kernel separates into two distinct clusters corresponding precisely to the addition and division samples (and a third smaller cluster for the trivial modular division case where the dividend is zero). Examining the underlying covariance values confirms this observation: cross-task covariances are narrowly distributed around zero, while within-task covariances are substantially larger. 

Though we lack a sufficient mechanistic understanding to establish whether this model's internal implementations of modular addition and division satisfy the criteria in \cref{appendix:decoupling}, observing vanishing correlation between tasks is consistent with the behavior theoretically predicted for functionally disjoint mechanisms. This establishes the kernel's utility in a setting with partially known ground-truth structure.

\begin{figure}
    \centering
    \includegraphics[width=1\linewidth]{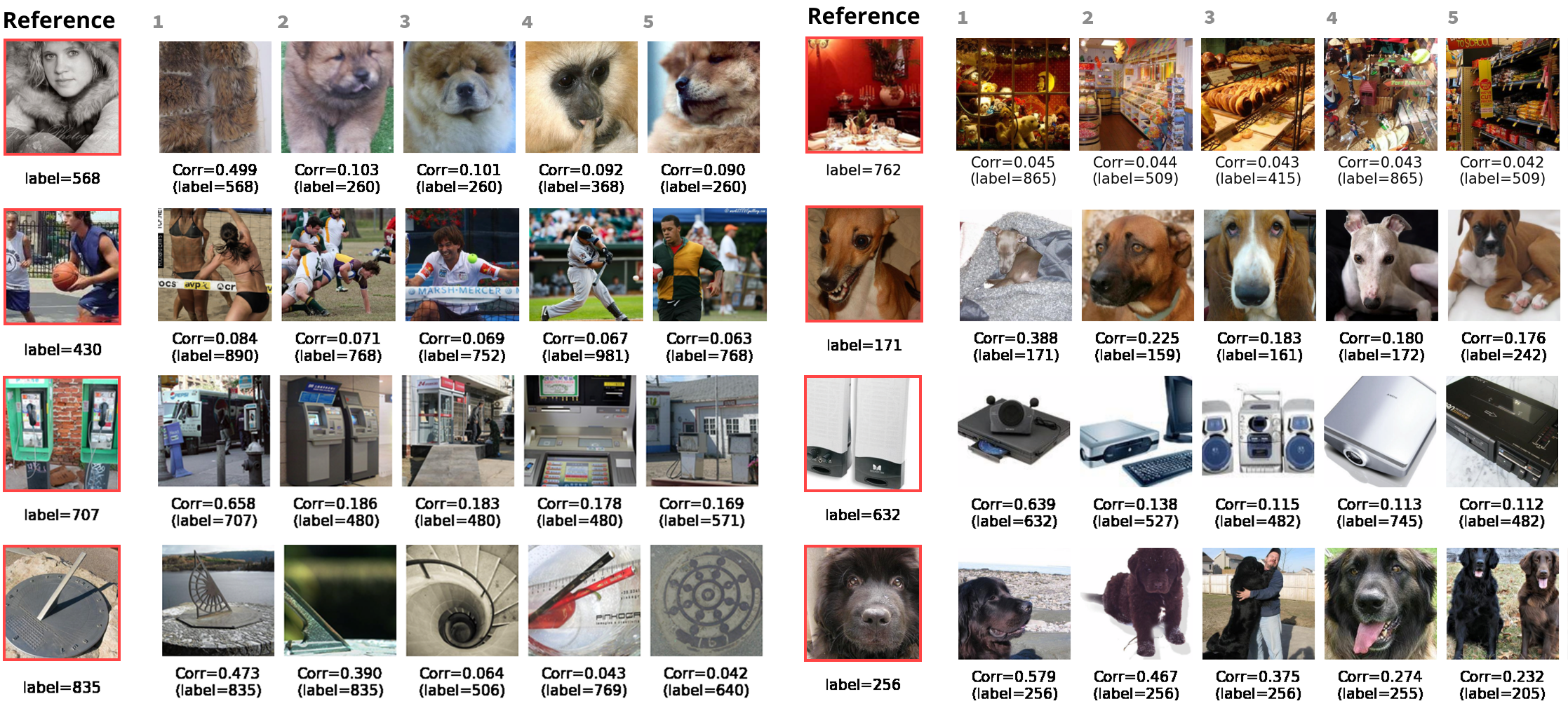}
        \caption{\textbf{Top-correlated examples under the loss kernel reveal interpretable patterns.} For each reference image (leftmost column), we show the top five most-correlated inputs under the loss correlation kernel $R$. We observe clustering by texture (e.g., fluffy fur coat and fluffy animals), shape (e.g., circular objects and line angle), color and category (e.g., people playing sports, electronics on a white background, dark vs.\ light brown dogs), and spatial layout (e.g., cluttered rooms). Additional visualizations are provided in Appendix     \ref{appendix:extra_im_examples}, and further computed correlation results are available at \url{https://github.com/singfluence-anon/sf_imagenet_corrs}
        }
         \label{fig:top_neighbors_qual}
\end{figure}

\section{Application to ImageNet}
\label{sec:hierarchy}

Having established theoretically and empirically that the loss kernel can identify ground-truth functional separation in a controlled setting, we now deploy it as an exploratory tool on a large-scale, real-world task. We consider an Inception-v1 model \citep{szegedy_going_2014} trained on ImageNet data~\citep{deng2009imagenet}, where the true functional organization is not fully known. Our goal is to investigate qualitatively whether we can use the kernel as a visualization tool and quantitatively whether structure in the kernel corresponds to meaningful semantic and hierarchical structure in the data.

\paragraph{Visualizing the loss kernel.} For 10,000 random validation examples, we compute the loss correlation matrix and examine top-correlated inputs. We find that nearest neighbors are  interpretable, often sharing patterns of color, texture, shape, or content. \cref{fig:top_neighbors_qual} provides qualitative examples of these relationships, showing the top and bottom correlated examples for a selection of inputs. Additional randomly chosen examples are available in \cref{appendix:extra_im_examples}.

\paragraph{Hierarchical structure in ImageNet.}
The ImageNet dataset \citep{deng2009imagenet} is not a flat collection of classes; its labels are drawn from and organized according to the WordNet hierarchy, a large lexical database of English where nouns, verbs, adjectives, and adverbs are grouped into sets of synonyms (synsets), each expressing a distinct concept~\citep{princeton2010wordnet}.  Each node in the ImageNet hierarchy represents a category (e.g., ``animals", ``mammals", ``devices", ``plants"), and each leaf node corresponds to a specific class, which the model was trained to predict (e.g., ``wire-haired fox terrier
, ``goldfish", ``castle"). This taxonomy provides a natural (though only partial) source of ground truth for establishing similarity between ImageNet inputs, based on the similarity between their output labels according to the WordNet hierarchy.

To visualize this ground-truth structure overlaid on the loss kernel, we color each sample in \cref{fig:big-fig} (A) by the position of that sample's label in the ImageNet hierarchy. The version of ImageNet we use in these experiments is organized into 1,000 classes; by sorting these classes via their position in the hierarchy we assign similar hues to inputs of nearby categories.

\paragraph{Hierarchical structure in the loss kernel.} The UMAP visualization in \cref{fig:big-fig} reveals a clear high-level organization that mirrors the primary branches of the WordNet hierarchy. A prominent split separates ``animals" from ``things," with a transitional region occupied by ``produce" (Inset 7). Within these broad domains, the kernel captures finer taxonomic distinctions. For example, the ``animal" kingdom subdivides into coherent superclasses. A large cluster representing ``domesticated animals'', particularly ``dogs" (Inset 1), transitions into other mammals like ``primates" (Inset 2), and then to ``birds" (Inset 3). Nearby, we observe distinct groupings for ``diapsids" (Inset 4), ``crustaceans" (Inset 5), and ``insects" (Inset 6). This hierarchical organization persists at deeper levels of specificity, as shown by the more detailed insets for ``musical instruments" (Inset 8). The block structure of the full correlation matrix, when sorted by the WordNet hierarchy (\cref{fig:big-fig} B), provides an additional confirmation of this nested structure, showing strong intra-class correlation that closely mirrors the ground-truth semantic distance matrix derived from WordNet.

\paragraph{The kernel as a developmental tool.}
At initialization the kernel shows no coherent structure (see \cref{fig:kernel_over_training}). As training proceeds, structure begins to emerge. Early checkpoints separate broad regimes (e.g., ``animal" vs.\ ``thing"), mid-training checkpoints resolve salient subgroups (e.g., ``dogs" forming a distinct cluster), and later checkpoints exhibit finer-grained specialization. 
The UMAP snapshots in \cref{fig:kernel_over_training} illustrate this coarse-to-fine trajectory, where neighborhoods that are initially mixed become progressively more taxonomically coherent as training converges.

\section{Related Works}

\paragraph{Bayesian influence functions and training data attribution.} The loss kernel we propose is a generalization of the negative (local) Bayesian Influence Function (BIF;~\citealt{singfluence1}), which has its roots in Bayesian sensitivity analysis \citep{giordano_covariances_2018, iba2025wkernel}. \citep{singfluence1} introduced the BIF as a tool for Training Data Attribution (TDA; \citealt{koh_understanding_2020}), a task focused on \textit{provenance} -- identifying which training points are most responsible for a specific model behavior. Our work addresses a different question: one of \textit{functional coupling}. We generalize the BIF from a unidirectional, single-point attribution measure into a global, symmetric, positive semidefinite kernel that measures the functional relationship between arbitrary pairs of inputs. Furthermore, we are the first to demonstrate its power for large-scale interpretability by applying kernel analysis techniques to this functional map. For more details on the differences, see \cref{appendix:tda}. 

\paragraph{Data-similarity kernels and metric learning.}
The general approach of learning a data-similarity kernel is a cornerstone of statistics and machine learning, and our work is situated within this broader context \citep{hofmann_review_2006,khatib_comprehensive_2024}. Classical methods like Principal Component Analysis (PCA) can be viewed as defining similarity through the data's covariance matrix. This was later generalized by Kernel PCA, which uses the \textit{kernel trick} to learn non-linear similarities in a high-dimensional feature space \citep{scholkopf_kernel_1997}. A related field, metric learning, is explicitly focused on learning distance or similarity functions that are optimized for specific tasks, often by training models that pull similar data points together while pushing dissimilar ones apart~\citep{kulis_metric_2013}. In modern deep learning, this principle is prominent in representation learning, where models learn to project data into a latent embedding space where simple distance metrics (e.g., cosine similarity) correspond to semantic similarity~\citep{bengio_representation_2014, mikolov_efficient_2013, chen_simple_2020}. 

\paragraph{Representation-based interpretability.} Representation-based kernels are not limited to models explicitly trained for their representations. For example, similarity measures like Centered Kernel Alignment (CKA; \citealt{kornblith_similarity_2019}) make it possible to derive kernels from intermediate activations of LLMs trained on next-token prediction. This falls under the broader field of representation-based interpretability, which includes other techniques such as supervised ``probes" that test for specific properties of activations, and unsupervised methods, like activation atlases \citep{carter2019activation} or sparse autoencoders (SAEs; \citealt{bricken2023monosemanticity}). As described in \cref{sec:introduction}, these representation-based interpretability techniques offer other ways to construct kernels. 

The loss kernel offers a perspective complementary to these representation-based methods. Where representation-based methods learn similarity based on what data points look like in an embedding space, the loss kernel defines similarity based on how the model treats them across the set of low-loss points. Understanding the relationship between activation-space similarity and weight-space functional coupling is a key open question. An interesting direction for future work is to bridge between these different kernel approaches. For example, Multiple Kernel Learning (MKL; \citealt{gonen_multiple_2011}) techniques could be adapted to learn a meta-kernel that combines information from both representations and weight-space geometry. 

\paragraph{Mechanistic and causal interventions.}
Mechanistic interpretability aims to identify circuits and algorithms via targeted interventions such as activation patching and ablations~\citep{wang_interpretability_2022}.  Our SGLD-based probe can be viewed as a complementary, weight-space analogue to these activation-space ablations. That said, our aims are different: we seek to use the loss kernel as an exploratory tool for discovering structure in data, rather than as a confirmatory tool for testing a mechanistic hypothesis. 

\paragraph{Developmental interpretability.} Developmental interpretability is an approach to interpretability that models the SGD learning process as an idealized Bayesian learning process, then applies SLT to derive theoretical predictions, and finally verifies those predictions empirically on models trained using standard stochastic optimization techniques. This approach has been used successfully to detect and interpret phase transitions in stagewise learning in toy models of superposition~\citep{chen2023dynamical,elhage2022superposition}, transformers trained on algorithmic tasks like list-sorting and in-context regression~\citep{carroll_dynamics_2025,urdshals2025structure}, and small language models~\citep{hoogland2024developmental, wang2025differentiation,baker_structural_2025,wang_embryology_2025}. 

The loss kernel is part of this broader agenda, particularly through its connection to key SLT quantities like the singular fluctuation (\cref{appendix:slt}).

\section{Discussion \& Conclusion}

We introduced a new technique, the loss kernel, for mapping and interpreting learned functional relationships between samples in a trained neural network. The kernel is defined as the covariance matrix of per-sample losses, computed under a distribution of parameter perturbations localized to the set of low-loss points. We first validated this method on a synthetic multitask problem, demonstrating that the kernel separates inputs by their underlying task, consistent with theoretical predictions for functionally independent mechanisms. Applied to an Inception-v1 model trained on ImageNet, we show that the loss kernel can be used to visualize the structure of the data distribution and that this structure reflects the WordNet semantic hierarchy. These findings highlight the loss kernel as a useful practical tool for interpretability.

\textbf{Limitations.} The SGLD sampling procedure can be computationally intensive, although it is a one-time, post-hoc cost (for instance, the kernel used in the ImageNet results, \cref{sec:hierarchy}, took three hours to compute on four A100 GPUs). Moreover, the results depend on the hyperparameters of the local posterior, particularly the localization strength $\gamma$ (see \cref{appendix:hyperparam_dependence}).  We also emphasize that our method is intentionally \textit{local}, designed to interpret the specific solution found by training, not the entire global loss landscape. Finally, the kernel reveals functional correlation, not causation; it is a tool for discovering related behaviors and generating hypotheses for more targeted mechanistic investigations.

\paragraph{Future directions.} This work opens several promising avenues for future research. A primary theoretical direction is to deepen the connections to singular learning theory, and to extend this methodology beyond pairwise statistics to explore higher-order correlations. We might also hope to formalize the relationship between weight-space coupling, as measured by our kernel, and representation similarity in activation space. On an applied front, the kernel can serve as a discovery tool to guide mechanistic interpretability by identifying functionally-coupled inputs that warrant circuit-level analysis. Its ability to identify functional outliers suggests applications in anomaly and out-of-distribution detection, and the core method can be adapted to other domains like language models using token-level losses. Finally, a key direction is to apply the kernel across training checkpoints to create a \textit{developmental} view of how a model's internal functional geometry emerges and solidifies over time.

In summary, the loss kernel offers a window into the way neural networks perceive their input data, helping to understand what data the model treats similarly, and what data the model treats differently. 

\ifshowackandcontribs

\section*{Acknowledgments}
We would like to thank Simon Pepin Lehalleur and Daniel Murfet for their detailed feedback and insightful discussions, and Rumi Salazar for helpful feedback and input. We are also grateful to Wilson Wu and Philipp Alexander Kreer for valuable discussions. We thank Stan van Wingerden for his assistance with the compute infrastructure.

Zach Furman was supported by the Melbourne Research Scholarship and Rowden White Scholarship during the completion of this research.
\section*{Author Contributions}
Maxwell Adam led the implementation, analysis, and visualization of all the experiments. Zach Furman led the theoretical development and definition of the loss kernel. Jesse Hoogland led the project and writing. All authors contributed to the writing. 

\fi

\section*{Reproducibility Statement}
To ensure our work is reproducible, we provide detailed descriptions of our methodology throughout the paper and its appendices. The core SGLD-based estimation procedure for the loss kernel is formally presented in \cref{sec:loss-kernel,appendix:sgmcmc}. All experiments were conducted on a public dataset (ImageNet;~\citealt{deng2009imagenet}) and a standard model architecture (Inception-v1;~\citealt{szegedy_going_2014}), or on a synthetic, fully described multitask arithmetic problem (\cref{sec:validation,appendix:toy-model}). A complete summary of the SGLD hyperparameters used for each experiment is available in \cref{tab:plk_hyperparam_summary} in \cref{appendix:hyperparam_overview}, with further implementation details and sensitivity analyses for the ImageNet setting discussed in \cref{appendix:hyperparam_dependence}. The setup for our main ImageNet analysis, including the quantitative evaluation against the WordNet hierarchy, is detailed in \cref{appendix:experimental}. 

\section*{LLM Usage Statement}
We used Large Language Models (LLMs) to help produce this paper. We used them to edit our writing by fixing errors and improving phrasing. We also used them to brainstorm the paper's structure and get feedback on our arguments. For our experiments, LLMs helped us write code and create figures. They also assisted us in strengthening the math and proofs. The authors checked all AI-generated suggestions and are fully responsible for the content of this paper.

\bibliography{references,references_extra,references_own}
\bibliographystyle{iclr2026_conference}
\newpage
\appendix

\part*{Appendix}
\begin{enumerate}
    \item \textbf{\cref{appendix:theory} Theory Extra:} Provides additional detail on the theoretical foundations for the paper's methodology.
    \begin{enumerate}
        \item \textbf{\cref{appendix:slt} Singular Learning Theory:} Introduces the core concepts of SLT for singular models like neural networks, connects the loss kernel to two key quantities from SLT (the \textit{empirical variance} and \textit{singular fluctuation}), and sketches what a \textit{population} version of the loss kernel would look like.
        \item \textbf{\cref{appendix:tda} Training Data Attribution:} Introduces influence functions from training data attribution and compares the loss kernel against a type of influence function known as the (local) Bayesian Influence Function (BIF).
        \item \textbf{\cref{appendix:coarea} From Sublevel Sets to Gibbs Distribution:} Establishes the formal relationship between expectations under the Gibbs distribution and integrals over low-loss sets, justifying the use of our probe distribution as a tractable probe of the low-loss parameter set.
        \item \textbf{\cref{appendix:decoupling} Decoupling of Disjoint Mechanisms:} Formalizes conditions under which the loss covariance between data points from independent subtasks is zero. 
    \end{enumerate}
    \item \textbf{\cref{appendix:sgmcmc} Stochastic-Gradient MCMC Estimator:} Provides additional details on the SGMCMC-based estimator we use to estimate the loss kernel. 
    \item \textbf{\cref{appendix:toy-model} Synthetic Task Extra:} Provides additional methodology and results for the synthetic multi-task arithmetic setting.
    \item \textbf{\cref{appendix:experimental} ImageNet Extra:} Provides additional methodology, hyperparameter values and ablations, and additional results for the ImageNet setting.

\end{enumerate}

\section{Theory Extra}\label{appendix:theory}

\subsection{Singular Learning Theory}
\label{appendix:slt}

Singular learning theory (SLT) is concerned with the theory of machine learning models which are \textit{singular}: very roughly, models for which their parameterization map is not one-to-one. Neural networks of virtually any architecture are examples of singular models. Singular models break many of the assumptions of traditional statistical learning theory \citep{watanabe2009algebraic, watanabe2018mathematical}. From an interpretability perspective, they have rich geometrical structure (e.g. in their loss landscape), which often reflects information about their internal structure \citep{murfet2025programs} and their training data \citep{lehalleur2025you}.

\subsubsection{Setup}
\label{appendix:slt_setup}

Classically, the setting of singular learning theory is \textit{parametric Bayesian learning}. We review the setup here briefly. See \citet{watanabe2009algebraic, watanabe2018mathematical} for a more in-depth treatment.

We begin with a parameter space $W \subset \mathbb{R}^d$ (assumed compact) and a sample space $\mathcal{Z}$. A \textit{parametric statistical model} assigns a probability $p(\rvz\mid\vw)$ to samples $\rvz \in \mathcal{Z}$ for a given parameter $\vw \in W$. In singular learning theory, we typically assume that $p(\rvz\mid\vw)$ is analytic or at the very least piecewise-analytic, which holds for most statistical models including the vast majority of neural networks.

To quantify sensitivity of $p(\rvz\mid\vw)$ to infinitesimal parameter perturbations, we define the Fisher information matrix:
\begin{equation*}
I_{jk}(\vw) = \int \left(\frac{\partial}{\partial w_j} \log p(\rvz\mid\vw)\right) \left(\frac{\partial}{\partial w_k} \log p(\rvz\mid\vw)\right) \, p(\rvz\mid\vw) \, dx.
\end{equation*}

A model is \textbf{regular} at a parameter $\vw \in W$ if the Fisher information matrix is positive-definite at $\vw$, and \textbf{singular} at $\vw$ otherwise. We often say that a model $p(\rvz\mid\vw)$ (withoutI specifying any parameter $\vw$) is regular if it is regular for all $\vw$, and singular otherwise.

Note that the notion of a singular model is a purely geometric property: we have yet to discuss \textit{learning} or \textit{Bayesian learning}. We proceed to discuss that now. We aim to learn a data distribution $q(\rvz)$ over $\mathcal{Z}$, which we have access to only indirectly via $n$ IID samples $\dataset = \{\rvz_1,\dots,\rvz_n\}$ from $q(\rvz)$. Our performance on this task is quantified by the \textit{negative log-likelihood} or \textit{training loss}, $L_n(\vw) = -\sum_{i=0}^n \log(p(\rvz_i\mid\vw))$.

In a Bayesian setting, we have a prior distribution $\varphi(\vw)$, and a (tempered) posterior distribution $p(\rvz\mid\dataset_n)$ obtained via Bayes rule:
\begin{equation*}
p(\rvz\mid\dataset_n) = \frac{1}{Z} \int_W \exp(-\beta L_n(\vw)) \, \varphi(\vw) \, d\vw
\end{equation*}
where $Z$ is a normalizing constant and $\beta$ is a hyperparameter known as the \textit{inverse temperature}. When $\beta=1$ this is the ordinary Bayesian posterior. Note that one sometimes chooses $\varphi(\vw)$ to be supported only in a neighborhood of a chosen point $\wmin \in W$, in which case we call this a \textit{local} posterior distribution \citep{quantifdegen}.

\subsubsection{Empirical Variance and the Singular Fluctuation}

Define the \textit{Bayesian training error} as the \textit{empirical} Kullback-Leibler divergence from the posterior predictive distribution to the true distribution:
\begin{equation*}
B_t = \frac{1}{n}\sum_{i=0}^n \log\left(\frac{q(\rvz)}{\mathbb{E}_w[p(\rvz_i\mid\vw)]}\right)
\end{equation*}
Define the \textit{Bayesian generalization error} as the \textit{population} Kullback-Leibler divergence from the posterior predictive distribution to the true distribution:
\begin{equation*}
B_t = \mathbb{E}_x \log\left(\frac{q(\rvz)}{\mathbb{E}_w[p(\rvz\mid\vw)]}\right)
\end{equation*}

The expected asymptotic difference between these quantities is given by the \textbf{singular fluctuation}:
\begin{equation*}
\nu(\beta) = \frac{1}{2} \lim_{n \rightarrow \infty} (\E_{\dataset}[B_g] - \E_{\dataset}[B_t]).
\end{equation*}
The singular fluctuation is a birational invariant appearing in many generalization formulas within SLT, including the difference between the Bayes and Gibbs generalization errors, or the difference between the Gibbs training error and Bayes generalization error.

\subsubsection{Connection to the Loss Kernel}\label{appendix:kernel-vs-singular-fluctuation}
The loss kernel can be seen as a generalization of the \textbf{empirical variance}, the empirical estimator of the singular fluctuation. The empirical variance is defined as:
\begin{equation*}
V = \sum_{i=0}^n \operatorname{Var}_w[\log(p(\rvz_i\mid\vw)],
\end{equation*}
which estimates the singular fluctuation via
\begin{equation*}
\frac{2\nu(\beta)}{\beta} = \lim_{n \rightarrow \infty} \mathbb{E}_{\dataset}[V].
\end{equation*}
If we treat the negative log-likelihood as a per-sample loss, $\ell(\rvz; \vw) = - \log(p(\rvz\mid\vw))$, and recall that the probe distribution coincides with the Bayesian posterior, this can be seen as the trace of the loss kernel evaluated on the training dataset $\dataset$:
\begin{align*}
V&= \sum_{i=0}^n \operatorname{Var}_w[\log(p(\rvz_i\mid\vw)] \\
&= \sum_{i=0}^n \operatorname{Cov}_w[\ell(\rvz_i; \vw), \ell(\rvz_i; \vw)]\\
&= \sum_{i=0}^n K(\rvz_i,\rvz_i)
\end{align*}
From this perspective, the loss kernel can be seen as a \textit{per-sample} generalization of the empirical variance, which further allows taking the covariance of two \textit{different} samples, including possibly samples outside the training dataset $\dataset$.

\subsubsection{Towards a Population Loss Kernel}
\label{appendix:population_kernel}

The loss kernel introduced in the main text is an \textit{empirical} object, computed from a finite training dataset $\dataset$ of size $n$. This section sketches the link between the empirical tool and what a \textit{population} version might look like in the limit as $n \rightarrow \infty$, which is the natural setting of singular learning theory. We expect this to be an interesting direction for future theoretical work.

\textbf{From Empirical to Population Loss.} The loss kernel probes the geometry of the \textit{empirical loss} landscape, $L_n(\vw) = \sum_{i=1}^n \ell(\rvz_i; \vw)$. In the asymptotic limit, the law of large numbers implies that this converges to a function known as the \textit{population loss}, $L(\vw)$. If the per-sample loss is the negative log-likelihood $\ell(\rvz;\vw) = -\log p(\rvz\mid\vw)$, it converges to the cross entropy (equivalently, KL divergence, up to a constant) from the true distribution $q(\rvz)$ to the model's distribution $p(\rvz\mid\vw)$:
\begin{equation*}
L(\vw) = -\int q(\rvz) \log p(\rvz\mid\vw) \, dx.
\end{equation*}

Let $L_0 = \min L(\vw)$. The geometry of the set $\tilde{W}_{\epsilon} = \{\vw \mid L(\vw) - L_0\ \le \epsilon \}$ as $\epsilon \rightarrow 0$ is intimately connected to the singularity theory of the function $L(\vw)$. The geometry in $L(\vw)$ and $\tilde{W}_{\epsilon}$ is rich, often reflecting interpretable computational structure, which we might hope to use for interpretability \citep{murfet2025programs}.

\textbf{Posterior Concentration.} The set $\tilde{W}_{\epsilon}$ has statistical meaning as well as geometric meaning. As the sample size $n$ goes to infinity, the posterior concentrates around $\tilde{W}_{\epsilon}$ for increasingly small $\epsilon$. The intuition behind this is simple (the posterior increasingly concentrates around better and better hypotheses as it gets more data), and we describe part of this connection in \cref{appendix:coarea}. However, we note that actually proving convergence is highly nontrivial for singular models and that \citet{watanabe2009algebraic} spends multiple chapters proving similar results. From the perspective of Bayesian statistics, this convergence means that the asymptotic geometry of $\tilde{W}_{\epsilon}$ controls statistical quantities like the generalization error \citep{watanabe2009algebraic}. For our purposes, it means that we can use the (local) posterior (the probe distribution, as we call it in the main text), whose properties can be estimated empirically using SGLD, to study the asymptotic properties of $\tilde{W}_{\epsilon}$.

\textbf{From Empirical Observables to Population Geometry.} We have said that one can use the local posterior (empirical) to probe the local asymptotic properties of $\tilde{W}_{\epsilon}$ (theoretical). To ground our discussion, we give a concrete example of how one does so for a \textit{different} tool, the \textit{local learning coefficient} (LLC; \citealt{quantifdegen}). Let $B(\wmin)$ be a closed ball about $\wmin$. The local learning coefficient $\lambda(\wmin)$ can be defined as the unique $\lambda(\wmin)$ such that asymptotically as $\epsilon \rightarrow 0$:
\begin{equation*}
\operatorname{Vol}(\tilde{W}_{\epsilon} \cap B(\wmin)) \approx c \, \epsilon^{\lambda(\wmin)} (-\log \epsilon)^{m-1}
\end{equation*}
for some constant $c$ and positive integer $m$. This is the \textit{population} quantity. It may be \textit{estimated} in practice with a local posterior expectation value:
\begin{equation*}
\hat{\lambda}(\wmin) = n\beta \big[ \EE_{\vw \sim p(\vw\mid\dataset)}[L_n(\vw)] - L_n(\wmin) \big].
\end{equation*}
This type of relationship is precisely what we conjecture to hold for some suitably-defined ``population" version of the loss kernel.

\textbf{A Population Loss Kernel.} In this paper, we do not define a population version of the loss kernel, but we expect this to be the start of a promising direction for future work. It seems conceivable that one could define such an object, and prove that it converges to the empirical loss kernel under some limit. From this perspective, the loss kernel as we have defined it in the main text would merely be an \textit{empirical estimator} of the population loss kernel. By analogy to quantities like the LLC, we might expect the population version to have desirable theoretical properties, such as reparameterization invariance (see Appendix C of \citealt{quantifdegen}). Most speculatively, one might even hope that such a population loss kernel could connect to information like ``computational structure" reflected in the population geometry \citep{murfet2025programs}.

\subsection{Training Data Attribution}
\label{appendix:tda}
The loss kernel is a natural generalization of a class of techniques known as influence functions, which are used for training data attribution (TDA; \citealt{cheng_training_2025}). This section clarifies the relationship between these objects.

\subsubsection{Classical Influence Functions}\label{appendix:if}
Classical influence functions (IFs) measure how a model's parameters and, consequently, any observable quantity, would change if a single training point were infinitesimally up-weighted \citep{cook_detection_1977, influence-functions}. To formalize this, consider a training set $\{z_i\}_{i=1}^n$ and a tempered empirical average loss $L_{n,\bbeta}(\vw) = \sum_{i=1}^n \beta_i \ell_i(\vw)$. Let $\wmin(\vbeta)$ be the parameter vector that minimizes this average loss. The influence of a training point $z_i$ on an observable $\phi(\vw)$ (e.g., the loss on a test point) is defined as the sensitivity of the observable evaluated at this new minimum to a change in the weight $\beta_i$:
\begin{equation}
    \CIF(z_i, \phi) := \frac{\partial \phi(\wmin({\vbeta}))}{\partial \beta_i}\bigg\rvert_{{\vbeta}=\bm{1}}.
\end{equation}
Applying the chain rule and the implicit function theorem, one arrives at the well-known formula involving the Hessian of the training loss, $\Hessian({\wmin})$:
\begin{equation}
     \CIF(z_i, \phi) = -\nabla_\vw \phi(\wmin)\tran \Hessian({\wmin})^{-1} \nabla_{\vw} \ell_i(\wmin).
\end{equation}
This approach faces significant challenges with modern neural networks, where the Hessian is typically singular (non-invertible) and computationally intractable to compute.

\subsubsection{Bayesian Influence Functions}\label{appendix:bif}
The Bayesian Influence Function (BIF) offers a principled, Hessian-free alternative \citep{giordano_covariances_2018, iba2025wkernel}. Instead of tracking a single point estimate $\wmin(\vbeta)$, the BIF measures the sensitivity of the \textit{expectation} of an observable under a tempered posterior distribution $p_{\bbeta}(\vw\mid\dataset) \propto \exp(-L_{n,\bbeta}(\vw))\varphi(\vw)$:
\begin{equation}
    \BIF(z_i, \phi) := \frac{\partial \E_{\vw \sim p_{\bbeta}(\vw\mid\dataset)}[\phi(\vw)]}{\partial \beta_i}\bigg\rvert_{{\vbeta}=\bm 1}.
\end{equation}
A standard result from statistical physics shows that this derivative is equal to the negative covariance over the untempered ($\vbeta=\bm 1$) posterior:
\begin{equation}
    \BIF(z_i, \phi) = -\Cov_{\vw \sim p(\vw\mid\dataset)}(\phi(\vw), \ell_i(\vw)).
\end{equation}
As proposed in \citet{singfluence1}, this method can be adapted to analyze standard, non-Bayesian models by defining a \textit{local} posterior that is constrained to the neighborhood of the trained parameters $\wmin$ when combined with scalable SGMCMC-based estimators. This ``local BIF'' provides a practical tool for TDA that is well-defined even for singular models.

\subsubsection{Connection to the Loss Kernel}\label{appendix:kernel-vs-bif}
The loss kernel differs from the BIF in three primary ways: 

First, the BIF is \textit{unidirectional}, measuring the influence of training points on (held-out) query points. This is because TDA focuses on \textit{provenance}---tracing a behavior back to individual training samples. The loss kernel, in contrast, drops this distinction and directionality; it is the full symmetric, positive semidefinite kernel where entries $K(\rvz, \rvz') = \Cov[\ell(\rvz; \vw), \ell(\rvz';\vw)]$ measure functional coupling between \textit{arbitrary} inputs---whether the model has encountered those samples during training or not.

Second, while influence functions focus on \textit{individual} interactions between (groups of) samples, the loss kernel, as a kernel, shifts the focus to \textit{global} functional organization. By applying techniques from kernel methods (e.g., UMAP), we  use the loss kernel as a primary tool for interpreting the global structure of the data manifold ``as seen by the model.'' This comes with a caveat:  it is possible to promote classical influence functions to a symmetric kernel and thereby to pull in these same kernel-derived methods. But in the classical paradigm, this operation lacks the same justification as we're able to provide for the loss kernel in \cref{sec:theory,appendix:slt}.

Finally, the loss kernel has deep theoretical grounding in singular learning theory (SLT) (see \cref{appendix:slt}). The diagonal of the loss kernel, $K(\rvz,\rvz)$, represents the per-sample loss variance, and its trace over the training set is an empirical variance, which is an estimator of the \textit{singular fluctuation}, a key quantity that governs the model's generalization error. We describe this connection in more detail in \cref{appendix:kernel-vs-singular-fluctuation}

\subsection{From Sublevel Sets to Gibbs Distribution}\label{appendix:coarea}

This appendix establishes the formal relationship between expectations under the Gibbs distribution and integrals over the low-loss sets of an analytic loss function $L(\vw)$. We demonstrate that these quantities are related by the Laplace transform, which justifies our use of a statistical expectation about the probe distribution as a tractable tool for probing the geometry of the loss landscape.

We consider two related quantities for analyzing an observable $f(\vw)$. The first is the integral of $f(\vw)$ over the $\epsilon$-low-loss set $W_\epsilon = \{\vw \in \R^d \mid L(\vw) - \min_{\vw'} L(\vw') < \epsilon\}$, which defines a function of $\epsilon$:
\begin{equation}
    g(\epsilon) = \int_{W_\epsilon} f(\vw) \, d\vw.
\end{equation}
The second is the expectation of $f(\vw)$ under the Gibbs distribution $p_{\text{gibbs}}(\vw) = \frac{1}{Z(\beta)} \exp(-\beta L(\vw))$, which defines a function of the inverse temperature $\beta$:
\begin{equation}
    \E_{\beta}[f(\vw)] = \frac{1}{Z(\beta)} \int_W f(\vw) e^{-\beta L(\vw)} \, d\vw
\end{equation}
where $Z(\beta)$ is a normalizing constant and $W$ is the parameter space.

The following proposition details the precise relationship between $g(\epsilon)$ and $\E_{\beta}[f(\vw)]$.

\begin{prop}
\label{prop:laplace_relation}
The Gibbs expectation $\E_{\beta}[f(\vw)]$ is the Laplace transform of the low-loss integral $g(\epsilon)$, up to a known factor:
\begin{equation}
    \E_{\beta}[f(\vw)] = \frac{\beta}{Z(\beta)} \mathcal{L}\left\{ g(\epsilon) \right\}(\beta),
\end{equation}
where $\mathcal{L}\{\cdot\}(\beta)$ denotes the Laplace transform with respect to $\epsilon$.
\end{prop}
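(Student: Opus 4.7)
The plan is to prove the identity via a direct Fubini computation, treating the Laplace transform as the primary object to unpack. For clarity I will work under the normalization $\min_{\vw} L(\vw) = 0$, which is without loss of generality (an additive shift in $L$ is absorbed into $Z(\beta)$ and cancels in $\E_\beta[f]$), so that $W_\epsilon = \{\vw : L(\vw) < \epsilon\}$ and $g(\epsilon) = \int_W f(\vw)\,\mathbf{1}_{\{L(\vw)<\epsilon\}}\,d\vw$.

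First I would expand the Laplace transform by substituting this indicator representation of $g$:
\begin{equation*}
\mathcal{L}\{g\}(\beta) \;=\; \int_0^\infty e^{-\beta \epsilon}\,g(\epsilon)\,d\epsilon \;=\; \int_0^\infty e^{-\beta \epsilon}\!\int_W f(\vw)\,\mathbf{1}_{\{L(\vw)<\epsilon\}}\,d\vw\,d\epsilon.
\end{equation*}
Next I would apply Fubini--Tonelli to exchange the order of integration, which is the only nontrivial analytic step. This is legitimate provided $|f|$ is integrable against the Gibbs measure for the given $\beta$ (equivalently, the absolute version of the right-hand side is finite); under this mild hypothesis the exchange yields the inner integral $\int_{L(\vw)}^\infty e^{-\beta \epsilon}\,d\epsilon = \beta^{-1} e^{-\beta L(\vw)}$. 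Combining,
\begin{equation*}
\mathcal{L}\{g\}(\beta) \;=\; \frac{1}{\beta}\int_W f(\vw)\,e^{-\beta L(\vw)}\,d\vw \;=\; \frac{Z(\beta)}{\beta}\,\E_\beta[f(\vw)].
\end{equation*}
Rearranging gives the claimed identity $\E_\beta[f] = \tfrac{\beta}{Z(\beta)}\mathcal{L}\{g\}(\beta)$.

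The main (minor) obstacle is justifying the Fubini swap: because $f$ need not be nonnegative we must invoke the absolute-integrability version, which I would handle by decomposing $f = f_+ - f_-$ and noting that each term gives a finite integral whenever $\E_\beta[|f|]<\infty$, a hypothesis that is implicit in the statement's use of the expectation. A secondary bookkeeping point is the shift $L \mapsto L - L_{\min}$; one can either absorb the resulting factor $e^{-\beta L_{\min}}$ into the partition function (this is the ``known factor'' alluded to in the statement) or assume the normalization from the outset as I did above. No regularity of $L$ beyond measurability is required for this argument, so the analyticity hypothesis of $L$ mentioned in the preamble is not used here — it is relevant for the refined asymptotic analysis of $g(\epsilon)$ (e.g.\ via resolution of singularities) that underlies the SLT applications, but the Laplace-transform identity itself is purely formal.
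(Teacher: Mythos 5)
Your proof is correct, and it takes a genuinely different route from the paper's. The paper starts from the Gibbs integral, invokes the coarea formula to rewrite it as $\int_0^\infty e^{-\beta\epsilon}\,g'(\epsilon)\,d\epsilon$, applies the Laplace-transform derivative identity $\mathcal{L}\{g'\}(\beta)=\beta\mathcal{L}\{g\}(\beta)-g(0)$, and then disposes of $g(0)$ by appealing to analyticity of $L$ (the set of global minimizers has measure zero). You instead start from $\mathcal{L}\{g\}(\beta)$ directly, write $g(\epsilon)=\int_W f(\vw)\mathbf{1}_{\{L(\vw)<\epsilon\}}\,d\vw$, swap the order of integration by Fubini, and evaluate the inner $\epsilon$-integral in closed form. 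Your route is more elementary: it does not require $g$ to be differentiable (or absolutely continuous), it sidesteps the coarea formula entirely, and, as you correctly observe, it reveals that the identity holds under bare measurability of $L$ plus an integrability hypothesis on $f$ against the Gibbs measure, with analyticity entering only in the downstream SLT asymptotics and not in this proposition. What the paper's approach buys is a more explicitly geometric picture, foregrounding the level-set decomposition of the Gibbs expectation and the term $g(0)$ whose vanishing is tied to the singular structure of the minimum set. One small point you implicitly rely on and could state explicitly: after normalizing $\min L=0$ you have $L(\vw)\ge 0$, so the inner integral's lower limit $L(\vw)$ indeed lies in $[0,\infty)$; this is exactly what makes the indicator-to-lower-limit rewrite valid.
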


\begin{proof}
By definition, the Gibbs expectation is given by
\[
    \E_{\beta}[f(\vw)] = \frac{1}{Z(\beta)} \int_W f(\vw) e^{-\beta L(\vw)} \, d\vw.
\]
Using the coarea formula, we may rewrite the integral over $\R^d$ as an iterated integral over the level sets of the loss function:
\[
    \E_{\beta}[f(\vw)] = \frac{1}{Z(\beta)} \int_0^\infty e^{-\beta \epsilon} \left( \frac{d}{d\epsilon} \int_{L(\vw) < \epsilon} f(\vw) \, d\vw \right) d\epsilon.
\]
Recognizing that $\int_{L(\vw) < \epsilon} f(\vw) \, d\vw = g(\epsilon)$, the expression becomes the Laplace transform of the derivative of $g(\epsilon)$:
\[
    \E_{\beta}[f(\vw)] = \frac{1}{Z(\beta)} \int_0^\infty e^{-\beta \epsilon} g'(\epsilon) \, d\epsilon = \frac{1}{Z(\beta)} \mathcal{L}\{g'(\epsilon)\}(\beta).
\]
The derivative property of the Laplace transform states that $\mathcal{L}\{g'(\epsilon)\}(\beta) = \beta \mathcal{L}\{g(\epsilon)\}(\beta) - g(0)$. This yields:
\[
    \E_{\beta}[f(\vw)] = \frac{1}{Z(\beta)} \left( \beta \mathcal{L}\{g(\epsilon)\}(\beta) - g(0) \right).
\]
The term $g(0) = \int_{W_0} f(\vw) \, d\vw$ is an integral over the set of global minima. If $L(\vw)$ is analytic, $W_0$ has Lebesgue measure zero, which implies $g(0) = 0$. The proposition follows.
\end{proof}

\cref{prop:laplace_relation} provides the theoretical basis for our methodology. The invertibility of the Laplace transform implies that the family of Gibbs expectations contains the same information as the family of low-loss-set integrals. We opt for the statistical quantity for practical reasons: $\E_{\beta}[f(\vw)]$ is amenable to gradient-based MCMC methods, making it computationally tractable for high-dimensional models. Furthermore, it provides a summary of the observable's behavior over all loss levels, weighted naturally by the Gibbs factor, thereby obviating the need to select an arbitrary threshold $\epsilon$. The Gibbs expectation is thus a practical and well-founded object for analyzing the properties of the low-loss subset.

\subsection{Decoupling of Disjoint Mechanisms}
\label{appendix:decoupling}

This section provides justification for the prediction in \cref{sec:validation} that a model that has learned disjoint mechanisms for independent tasks should have zero loss covariance between samples from different tasks, under the condition that the mechanisms involve non-overlapping weights.

\begin{prop}
\label{prop:disjoint_mechanisms}
Let a model's parameters $\vw$ be partitioned into two disjoint sets, $\vw = (\vw_A, \vw_B)$. Let the training data $\dataset$ be partitioned into two disjoint sets $\dataset_A$ and $\dataset_B$, corresponding to two independent subtasks. Assume the model has learned disjoint mechanisms, such that for any data point $\rvz \in \dataset_A$, its loss $\ell(\rvz; \vw)$ is a function only of $\vw_A$, and for any $\rvz' \in \dataset_B$, its loss $\ell(\rvz'; \vw)$ is a function only of $\vw_B$. Then, under the probe distribution, the loss covariance between $\rvz$ and $\rvz'$ is zero:
\begin{equation*}
K(\rvz,\rvz') = \mathrm{Cov}_{\vw \sim p(\vw\mid\dataset)}[\ell(\rvz;\vw_A), \ell(\rvz';\vw_B)] = 0
\end{equation*}
\end{prop}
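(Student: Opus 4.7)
The plan is to show that the probe distribution $p(\vw \mid \dataset)$ factorizes as a product of independent distributions over $\vw_A$ and $\vw_B$, after which the covariance of the two losses vanishes because they are functions of independent random variables.

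First I would observe that, under the disjoint mechanisms hypothesis, the training loss splits additively across parameter blocks. Since every sample $\rvz \in \dataset_A$ has a loss depending only on $\vw_A$ (and similarly for $\dataset_B$), the empirical loss decomposes as
\begin{equation*}
L_n(\vw) = \sum_{\rvz \in \dataset_A} \ell(\rvz; \vw_A) + \sum_{\rvz' \in \dataset_B} \ell(\rvz'; \vw_B) =: L_A(\vw_A) + L_B(\vw_B).
\end{equation*}
Consequently the Gibbs factor factorizes: $\exp(-\beta L_n(\vw)) = \exp(-\beta L_A(\vw_A)) \cdot \exp(-\beta L_B(\vw_B))$.

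Second I would note that the isotropic Gaussian localization $\mathcal{N}(\vw \mid \wmin, \gamma^{-1} I)$ likewise factorizes along the partition, writing $\wmin = (\wmin_A, \wmin_B)$, because the covariance matrix is diagonal with no cross terms between the two blocks. Multiplying the two factorizations, the unnormalized probe density is a product of a function of $\vw_A$ and a function of $\vw_B$; by Fubini the normalizing constant also separates, so $p(\vw \mid \dataset) = p_A(\vw_A)\, p_B(\vw_B)$. Hence $\vw_A$ and $\vw_B$ are independent under the probe distribution.

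Finally, since $\ell(\rvz; \vw)$ depends only on $\vw_A$ and $\ell(\rvz'; \vw)$ depends only on $\vw_B$, the standard fact that functions of independent random variables are uncorrelated yields $K(\rvz, \rvz') = 0$. The proof is essentially a short calculation once the factorization is spelled out; the only subtlety, and the one thing worth flagging, is that the argument uses the isotropy (or at least block-diagonality) of the Gaussian prior. A non-diagonal localization kernel coupling $\vw_A$ and $\vw_B$ would reintroduce dependence between the blocks and break the conclusion, so the result is genuinely a statement about both the mechanistic assumption on the loss and the structure of the probe distribution.
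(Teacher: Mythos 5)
Your proof is correct and follows the same route as the paper's: factorize the Gibbs term via additive separability of the loss, factorize the isotropic Gaussian localizer, conclude independence of $\vw_A$ and $\vw_B$ under the probe distribution, and then invoke the fact that functions of independent random variables are uncorrelated. The remark about the argument relying on block-diagonality of the localization kernel is a nice addition but does not change the substance.
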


\begin{proof}
Under the stated assumptions, the total loss $L(\vw)$ on the dataset $\dataset = \dataset_A \cup \dataset_B$ is additively separable:
\begin{equation*}
L(\vw) = \sum_{\rvz \in \dataset} \ell(\rvz;\vw) = \sum_{\rvz \in \dataset_A} \ell(\rvz;\vw_A) + \sum_{\rvz \in \dataset_B} \ell(\rvz;\vw_B) = L_A(\vw_A) + L_B(\vw_B)
\end{equation*}
The probe distribution $p(\vw\mid\dataset)$ is given by:
\begin{equation*}
p(\vw\mid\dataset) \propto \exp(-\beta L(\vw)) \cdot \mathcal{N}(\vw | \wmin, \gamma^{-1}I)
\end{equation*}
The spherical Gaussian localization term $\mathcal{N}(\vw | \wmin, \gamma^{-1}I)$ also factorizes over the disjoint parameter sets:
\begin{equation*}
\mathcal{N}(\vw | \wmin, \gamma^{-1}I) \propto \exp\left(-\frac{\gamma}{2}\|\vw - \wmin\|^2\right) = \exp\left(-\frac{\gamma}{2}\|\vw_A - \vw_A^*\|^2\right) \exp\left(-\frac{\gamma}{2}\|\vw_B - \vw_B^*\|^2\right)
\end{equation*}
Substituting the separable loss and the factorized Gaussian into the probe distribution definition, we find that the probe distribution itself factorizes:
\begin{align*}
p(\vw_A, \vw_B | \dataset) &\propto \exp(-\beta [L_A(\vw_A) + L_B(\vw_B)]) \cdot \mathcal{N}(\vw_A | \vw_A^*, \gamma^{-1}I_A) \cdot \mathcal{N}(\vw_B | \vw_B^*, \gamma^{-1}I_B) \\
&\propto \left[ \exp(-\beta L_A(\vw_A)) \mathcal{N}(\vw_A | \vw_A^*, \gamma^{-1}I_A) \right] \cdot \left[ \exp(-\beta L_B(\vw_B)) \mathcal{N}(\vw_B | \vw_B^*, \gamma^{-1}I_B) \right] \\
&\propto p_A(\vw_A | \dataset_A) \cdot p_B(\vw_B | \dataset_B)
\end{align*}
where $p_A$ and $p_B$ are the probe distributions for each sub-problem. This factorization implies that $\vw_A$ and $\vw_B$ are independent random variables under the joint posterior $p(\vw\mid\dataset)$.

The covariance between the losses $\ell(\rvz;\vw_A)$ and $\ell(\rvz';\vw_B)$ is defined as:
\begin{equation*}
\mathrm{Cov}[\ell(\rvz;\vw_A), \ell(\rvz';\vw_B)] = \mathbb{E}[\ell(\rvz;\vw_A)\ell(\rvz';\vw_B)] - \mathbb{E}[\ell(\rvz;\vw_A)]\mathbb{E}[\ell(\rvz';\vw_B)]
\end{equation*}
Since $\ell(\rvz; -)$ is a function only of $\vw_A$, and $\ell(\rvz'; -)$ is a function only of $\vw_B$, and $\vw_A$ and $\vw_B$ are independent, the expectation of their product is the product of their expectations:
\begin{equation*}
\mathbb{E}[\ell(\rvz;\vw_A)\ell(\rvz';\vw_B)] = \mathbb{E}[\ell(\rvz;\vw_A)]\mathbb{E}[\ell(\rvz';\vw_B)]
\end{equation*}
Therefore, the covariance is zero:
\begin{equation*}
\mathrm{Cov}[\ell(\rvz;\vw_A), \ell(\rvz';\vw_B)] = \mathbb{E}[\ell(\rvz;\vw_A)]\mathbb{E}[\ell(\rvz'; \vw_B)] - \mathbb{E}[\ell(\rvz; \vw_A)]\mathbb{E}[\ell(\rvz'; \vw_B)] = 0
\end{equation*}
This holds for any $\rvz \in \dataset_A$ and $\rvz' \in \dataset_B$.
\end{proof}

While this sketch is illustrative, we note that it may be somewhat unrealistic to believe that deep learning models implement distinct mechanisms in disjoint sets of weights. See for instance the phenomenon of \textit{polysemanticity} \citep{elhage2022superposition}. It may require a change of coordinates before mechanisms cleanly factorize. From a singular learning theory perspective, the correct remedy here is likely found at the level of \textit{population} quantities, which are often invariant to arbitrary (diffeomorphic) coordinate change (see for example Appendix C of \citet{quantifdegen}). We discuss the possibility of a \textit{population} loss kernel with such a property in \cref{appendix:population_kernel}, but we largely leave that to future work.

\section{Stochastic-Gradient MCMC Estimator}
\label{appendix:sgmcmc}

Evaluating the \emph{loss kernel} \(K(\rvz,\rvz')
       =\Cov_{\vw}[\ell(\rvz;\vw),\ell(\rvz';\vw)]\)
requires Monte-Carlo samples from the probe distribution
\(p(\vw\mid\dataset)\).
Following \citet{quantifdegen}, we use Stochastic Gradient Langevin Dynamics (SGLD; \citealt{welling2011bayesian}).

\paragraph{Update rule.}
With stochastic mini-batch \(B_t\subset[n]\) of size \(m\) and step size
\(\epsilon\), SGLD performs
\begin{equation}
    \vw_{t+1}
    \;=\;
    \vw_t
    -\frac{\epsilon}{2}
      \Bigl(
        \frac{n}{m}\!\!\sum_{\rvz\in B_t}\!\nabla_w\ell(\rvz;\vw_t)
        +\gamma\bigl(\vw_t-\wmin\bigr)
      \Bigr)
    +\sqrt{\epsilon}\,\bxi_t,
    \qquad
    \bxi_t\sim\mathcal{N}(0,I).
    \label{eq:sgld_step}
\end{equation}
The first term is the stochastic gradient of the
loss; the second is the gradient of the
Gaussian localization potential \(\frac{\gamma}{2}\|\vw-\wmin\|^{2}\); the injected Gaussian noise ensures asymptotic convergence to
\(p(\vw\mid\dataset)\) as \(\epsilon\to0\).

\paragraph{Parallel chains and burn-in.}
To improve mixing we run \(C\) independent chains, each initialized at
\(\wmin\).
After discarding a burn-in of \(b\) iterations, we retain
\(T\) draws \(\{\vw_{c,t}\}_{t=1}^{T}\) per chain.
For every retained weight we record the vectors
\(\ell(\rvz_i;\vw_{c,t})\).

\paragraph{Estimators.} The unbiased plug-in estimators for $K(\rvz,\rvz')$ and $R(\rvz,\rvz')$ are:
\begin{align*}
    \hat K(\rvz,\rvz')
    &=\frac{1}{CT-1}
      \sum_{c=1}^{C}\sum_{t=1}^{T}
      \bigl(\ell(\rvz;\vw_{c,t})-\hat\mu(\rvz))
      \bigl(\ell(\rvz';\vw_{c,t})-\hat\mu(\rvz')),\\
    \hat R(\rvz,\rvz')
    &=\hat K(\rvz,\rvz')/\sqrt{\hat K(\rvz,\rvz)\hat K(\rvz',\rvz')},
\end{align*}
where $\hat\mu(\rvz)$ is the estimated average loss:
\begin{equation*}
    \hat\mu(\rvz)
    =\frac{1}{CT}\sum_{c,t}\ell(\rvz;\vw_{c,t}).
\end{equation*}

\paragraph{Batched evaluation.}
At each retained iteration \(\vw_{c,t}\), a full forward pass is performed over the entire dataset of interest to compute and store the loss vector \((\ell(\rvz;\vw_{c,t}))_i\). In contrast, the SGLD update in \cref{eq:sgld_step} only requires a single backward pass on a small, random minibatch \(B_t\). 

Contrast this with the local Bayesian Influence Function (BIF; \citealt{singfluence1}), which requires computing forward passes over two separate ``attribution'' and ``query'' datasets. We compute forward passes only over a single set, yielding an \(n \times n\) covariance kernel. This is effectively the same as treating every sample as both an ``attribution'' and a ``query'' point to measure the functional coupling between all pairs of inputs.

\paragraph{Avoiding spurious correlations.}\label{appendix:spurious-correlations}
We observe that a high correlation between inputs of the same label often is spurious. At some SGLD hyperparameters, noise injected in the unembedding weights causes inputs of the same label to always slightly increase or decrease in loss together. This can dominate the observed correlations. Similar issues apply to per-sample gradient and activation based methods, where often the unembedding weights aren't used in computation for the same reason. For example, we find that we can perfectly recover input labels by running SGLD for 10 steps on an \textit{untrained model}. UMAP works via a fuzzy nearest neighbors lookup, and so to deconfound our UMAPs we delete edges between same label inputs during the neighbor finding step. This means two inputs of the same label will never be neighbors \textit{just because they share a label}. 

\subsection*{Hyperparameters Overview}
\label{appendix:hyperparam_overview}
\Cref{tab:plk_hyperparam_summary} summarizes the hyperparameter settings for the correlation kernel experiments. We sample with SGLD: $m$ is the batch size, $C$ is the number of chains, $T$ the number of draws per chain, $b$ is the number of burn-in steps, $\epsilon$ is the learning rate, $\beta$ is the inverse temperature, and $\localization$ is the localization strength. 

\begin{table}[H]
\caption{Summary of hyperparameter settings for correlation kernel experiments. Hyperparameters are defined in \cref{appendix:sgmcmc} and \cref{sec:loss-kernel}.}
\centering
\begin{adjustbox}{width=\textwidth}
\begin{tabular}{l l l c c c c c c c}
\toprule
Section            & Model         & Dataset                      & $m$ & $C$   & $T$  & $b$                & $\epsilon$       & $n\beta$ & $\gamma$ \\
\midrule
\cref{sec:validation}& 2 Layer Transformer& Modular Addition and Modular Division mod 97.& 512& 30& 800&  200& $2 \times 10^{-7}$& 500& 30,000\\
\cref{sec:hierarchy}& Inception-v1& ImageNet& 256& 15& 500&  100& $5 \times 10^{-5}$& 20& 4,000\\
\cref{sec:memorization}& Inception-v1   & ImageNet with 1,000 random samples mislabeled.& 256& 8& 1000&  100& $1 \times 10^{-5}$& 20& 4,000\\
\cref{appendix:hyperparam_dependence}& Inception-v1& ImageNet& 256& 5& 1200& 2000& $1 \times 10^{-4}$& 20&Varied\\
\bottomrule
\end{tabular}
\end{adjustbox}
\label{tab:plk_hyperparam_summary}
\end{table}

\section{Synthetic Task Extra}\label{appendix:toy-model}

This section provides additional details for the synthetic multitask experiment presented in \cref{sec:validation}.

\paragraph{Model architecture.} We use a two-layer transformer with the same architecture as that used in the original grokking experiments by \citet{power2022grokking}. We refer the reader to their work for specific architectural details. We make one modification which is to double the vocabulary, so that each task uses an independent set of tokens. 

\paragraph{Tasks and dataset.} The model was trained on a multitask problem comprising modular addition and modular division, both over the prime modulus $p=97$. Inputs for both tasks are sequences of the form \texttt{a}, \texttt{b}, \texttt{result}. The use of non-overlapping vocabularies is sufficient to for the model to distinguish which operation must be performed. 

Training and evaluation data were generated by sampling integers $a, b \in \{0, \dots, 96\}$ uniformly at random. For modular division $a/b$, we compute $a \cdot b^{-1} \pmod{97}$, where $b^{-1}$ is the modular multiplicative inverse of $b$. We exclude cases where $b=0$.

\paragraph{Training.} The model was trained on both tasks simultaneously using the Adam optimizer until it achieved 100\% accuracy on the training set.

\paragraph{Loss kernel estimation.} After training, we estimated the loss kernel to analyze the learned functional structure. We used SGLD to draw samples from the local posterior distribution, localized around the final trained weights $\wmin$. We collected a total of 30,000 posterior weight samples after an initial burn-in period of 200 steps for each chain. The loss kernel was then computed over an evaluation set of 10,000 randomly selected inputs, evenly split between the modular addition and modular division tasks. The specific SGLD hyperparameters, including learning rate $\epsilon$, inverse temperature $\beta$, and localization strength $\gamma$, are provided in the main hyperparameter summary (\cref{tab:plk_hyperparam_summary}).

\section{ImageNet Extra}\label{appendix:experimental}

\subsection{Inception-v1}
\label{appendix:inception-v1}

We apply our method to Inception-v1 \citep{szegedy_going_2014}. Each Inception-v1 experiment \textit{evaluates} posterior correlations over 10,000 ImageNet validation samples, while sampling over the full ImageNet \citep{deng2009imagenet} training dataset. To reduce memory overhead, we downscale all images to 256x256 resolution. Full hyperparameters are included in \cref{tab:plk_hyperparam_summary}. We find that the quality of correlations depends significantly on total draws used: see \cref{appendix:hyperparam_dependence} for extended discussion.

\subsection{Quantifying Hierarchical Structure} 
\label{appendix:quant_hierarchy}
To move beyond visual inspection, we quantitatively assess how well the kernel's structure aligns with the WordNet hierarchy.

\paragraph{Taxonomic lift construction.}
For each validation image \(i\) with WordNet label \(y(i)\) at depth \(d(i)\),
we take its top-\(k\) neighbors under the correlation kernel \(R\) (we use \(k{=}30\)).
For any candidate ancestor depth \(d'\), define
\[
\mathrm{Lift}(d,d') \;=\;
\frac{\Pr\!\left[\exists\, a \text{ at depth } d' \text{ such that } a \preceq y(i) \ \wedge\  a \preceq y(j)\ \middle|\ j \in \mathrm{NN}_k(i),\ d(i){=}d \right]}
{\Pr\!\left[\exists\, a \text{ at depth } d' \text{ such that } a \preceq y(i) \ \wedge\  a \preceq y(j)\right]},
\]
where \(a \preceq y\) means “\(a\) is an \emph{ancestor} of label \(y\)” in the ImageNet--WordNet hierarchy
(equivalently, \(y \in \mathrm{Descendants}(a)\)).
We condition on query depth \(d\) to avoid confounding from the uneven leaf-depth distribution.
Curves in Fig.~\ref{fig:quant_hierarchy} report \(\mathrm{Lift}(d,d')\) versus the depth of the shared ancestor
(i.e., tree distance from the root), with one curve per query depth \(d\).

\begin{figure}[H]
  \centering
  \begin{minipage}[t]{0.48\linewidth}
    \vspace{-0pt}
    \centering
    \includegraphics[width=\linewidth]{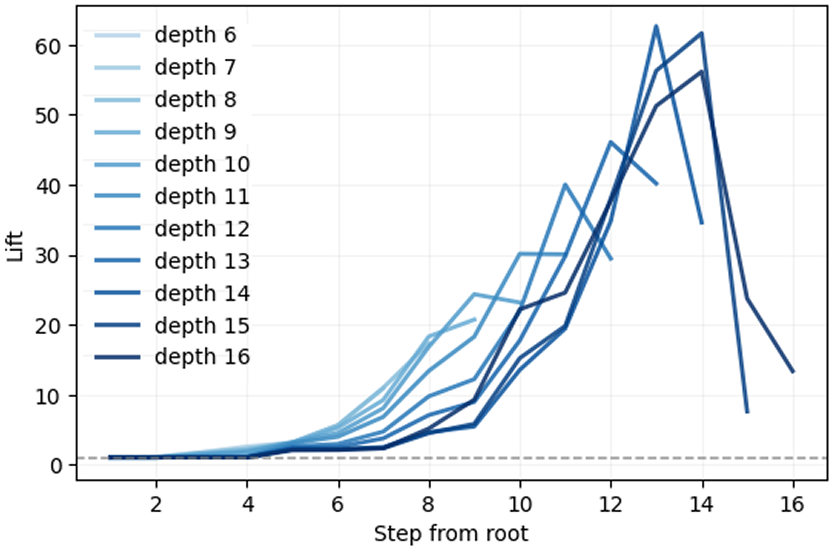}
    \vspace{-15pt}
    \caption{
        \textbf{Taxonomic lift vs.\ hierarchy depth.}
        Lines depicts the weighted probability (lift) that the nearest neighbors of an input with a label $d$ nodes deep in the WordNet hierarchy will share a parent node at depth $d'$. The $x$--axis is the WordNet tree distance (edges) from the root to the shared ancestor.
        We report \emph{lift} as the ratio of this probability to the dataset base rate at
        depth. See \cref{appendix:quant_hierarchy} for details.
    }
    \label{fig:quant_hierarchy}
  \end{minipage}\hfill
  \begin{minipage}[t]{0.48\linewidth}
    \vspace{2pt}
    \noindent\textbf{Estimation details.}
        We evaluate on \(n{=}10{,}000\) validation images and average the probability over queries with depth \(d\).
        We exclude identical-label pairs when constructing neighbors to avoid trivial lifts; The decrease towards the end of lines for larger $d$ is because nodes deep in the hierarchy often have few children.
    
    \vspace{10pt}
    \noindent\textbf{Interpretation.}
        Lift \(>1\) indicates that kernel-nearest neighbors are \emph{more likely than chance}
        to share a taxonomy node at depth \(d'\).
        We observe: (i) lift increases with query depth \(d\) (deeper, more specific classes show stronger taxonomic cohesion);
        (ii) lift peaks at intermediate \(d'\) and tapers near the root (ancestors too coarse) and near leaves (sparsity reduces shared-ancestor opportunities),
        consistent with the qualitative UMAP and the correlation--distance decay in the main text.
  \end{minipage}

\vspace{-15pt}
\end{figure}

\par\vspace{\intextsep} 

\subsection{Hyperparameter Dependence}
\label{appendix:hyperparam_dependence}

\paragraph{Convergence of the estimator.}

Centered Kernel Alignment (CKA) is a similarity measure between two kernel (or Gram) matrices. 
Given kernels $K, L \in \mathbb{R}^{n \times n}$, the CKA is defined as
\[
\mathrm{CKA}(K, L) = \frac{\langle K_c, L_c \rangle_F}{\|K_c\|_F \, \|L_c\|_F},
\]
where $K_c$ and $L_c$ denote the centered versions of $K$ and $L$, and $\langle \cdot , \cdot \rangle_F$ is the Frobenius inner product. 
This normalization ensures that $\mathrm{CKA}(K, L) \in [0,1]$, with 1 indicating identical representational structure. 

CKA analysis reveals the consistency and similarity of representations across different training runs and sampling procedures. We can use it to compare the kernels we get at different hyperparameters, but also how the kernel evolves as total SGLD step count increases. \cref{fig:corr_vs_dist} \textbf{A} shows how the CKA between the kernel at step $t$ of SGLD and the kernel at the final step changes as a function of $t$. Note that $t$ is total steps over all chains and that we limit individual chain. We find that higher $\gamma$ leads to faster convergence. Similarly, \cref{fig:corr_vs_dist} \textbf{B} shows the CKA between kernels computed using different $\gamma$ parameters. At high $\gamma$ the CKA between kernels is close to 1, meaning the kernel is robust to specific choice of $\gamma$.

\begin{figure}[t!]
    \centering    
     \includegraphics[width=1.\linewidth]{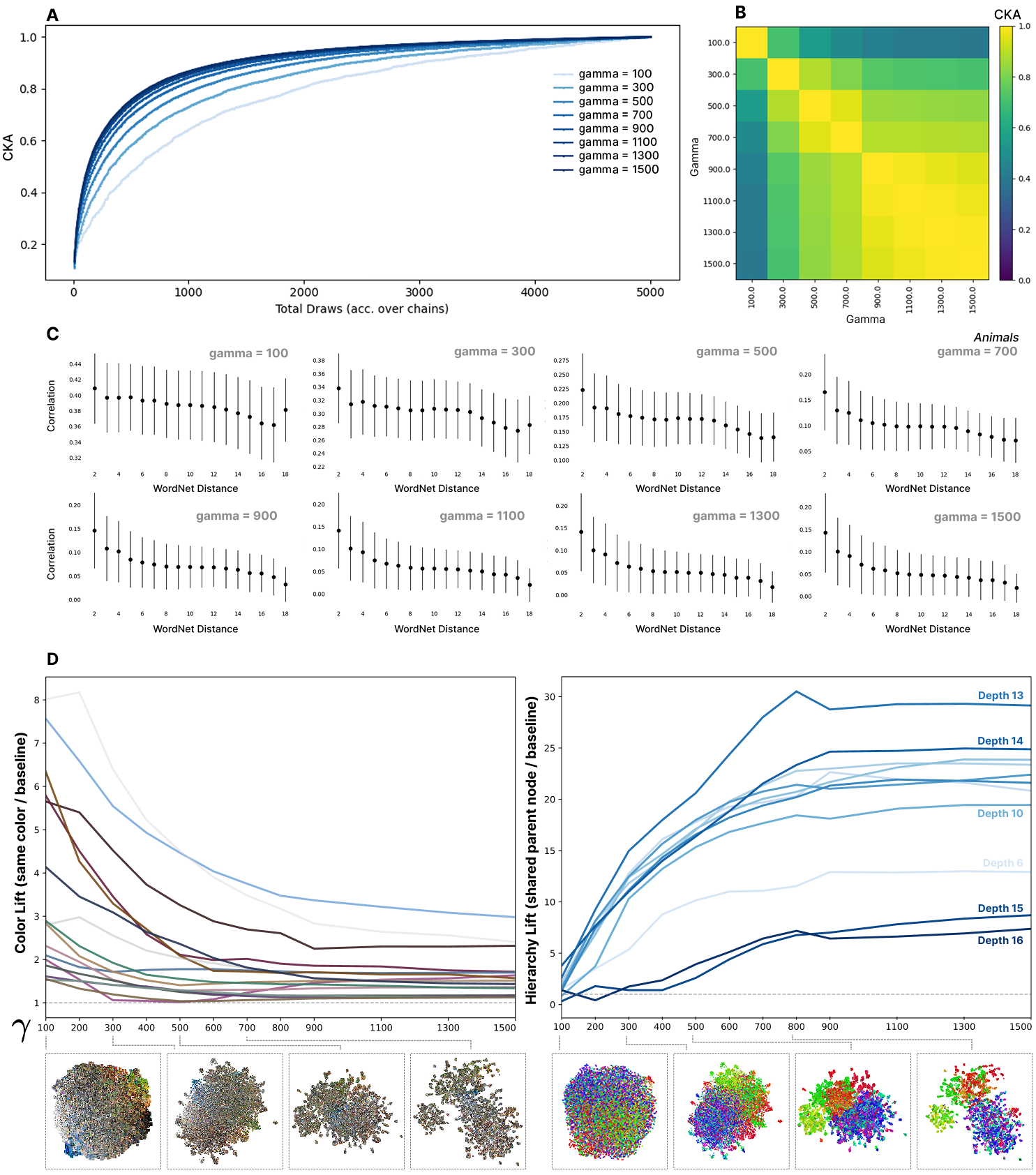}     
    \caption{
        \textbf{Dependence of the kernel on the SGLD hyperparameter $\gamma$.} 
       \textbf{A}: CKA between the kernel at step $t$ of SGLD and the final step of SGLD. Shows how the kernel converges as a function of total draws taken.
       \textbf{B}: CKA between kernels computed using different $\gamma$ values.The kernel stabilizes between $\gamma=900$ and $\gamma=1500$. 
       \textbf{C}: Loss kernel correlation vs distance across the WordNet hierarchy for \textit{animal} inputs. As $\gamma$ increases inputs closer in the hierarchy become relatively more correlated than inputs further away in the hierarchy, showing that gamma controls how reflected the hierarchy is in the kernel. 
        \textbf{D:} Lift (neighbor match rate divided by base rate) for \emph{color} (left) and \emph{ImageNet-WordNet node} (right) as $\gamma$ varies. 
        Low $\gamma$ emphasizes low-level cues (high color-lift); increasing $\gamma$ suppresses color-lift while strongly increasing hierarchical coherence. 
        UMAPs beneath each curve illustrate the same trend qualitatively.
        }
    \label{fig:corr_vs_dist}
\end{figure}

\paragraph{The effect of $\gamma$.}
Recall that the hyperparameter $\gamma$ controls how tightly the probe distribution is concentrated around $\wmin$ in parameter space. Empirically, \Cref{fig:corr_vs_dist} \textbf{D} quantifies this trade-off with a simple \emph{lift} metric (the weighted probability that a sample’s nearest neighbors under the loss kernel $R$ share an attribute, divided by that attribute’s base rate). At \textit{low} $\gamma$, neighbors are disproportionately matched by low-level cues such as \emph{color} (high color-lift); as $\gamma$ increases, color-lift falls while \emph{hierarchical} coherence (neighbors sharing nearby nodes in WordNet) rises sharply. We detail how we group inputs by color in \cref{sec:color} -- we use the groupings to compute the same way we compute per-node lift in \cref{appendix:quant_hierarchy}.


UMAP snapshots beneath each curve show the same transition qualitatively: low $\gamma$ yields broad, texture/color-organized neighborhoods, while high $\gamma$ foregrounds semantically tight groupings aligned with the taxonomy. Specific per-experiment hyperparameter settings are detailed in the below section.

\subsection{Detecting Memorization}
\label{sec:memorization}

\begin{wrapfigure}{r}{0.5\textwidth}
    \vspace{-30pt} 
    \centering
    \includegraphics[width=0.5\textwidth]{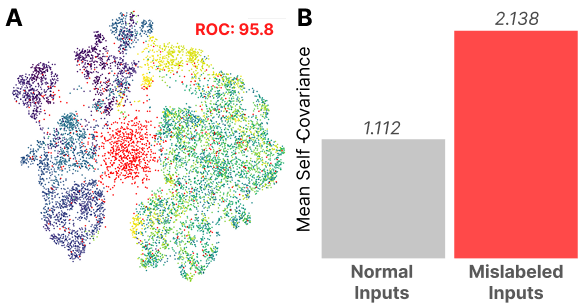}
        \caption{\textbf{A}: A UMAP visualization of the loss kernel for an Inception-v1 model trained until convergence on ImageNet with $1,000$ samples mislabeled. Mislabeled inputs (red) form a distinct cluster. We report an ROC of 95.8 for detecting mislabeled points using per-sample loss variances $l$. \textbf{B}: The mean self-covariance, or singular fluctuation, of normal (1.112) and mislabeled (2.138) inputs. 
     }
    \label{fig:memorization}
    \vspace{-20pt}  
\end{wrapfigure}

We test whether the loss kernel is sensitive to changes in the functional constraints imposed on the model by making a targeted change to the model's training data distribution. We randomly mislabel a subset of the training data, forcing the model to memorize in order to achieve a low loss.

Memorization imposes a strict \textit{functional constraint} on our model. A very precise weight setting is required to achieve high performance -- put simply, the set of parameters that achieve low loss on the mislabeled set forms a much narrower region (a sharper basin) than the region that preserves low loss when the mapping can be supported by shared features. 

As detailed in \cref{appendix:slt}, the trace of our kernel is an estimator for the singular fluctuation, a quantity that appears in the asymptotic formula for the Gibbs generalization error. The kernel itself can be seen as measuring the first-order change in a related quantity known as the Bayes generalization error, with respect to the importance of each data point. While these notions of generalization are not immediately related to the type of memorization we study empirically, this provides some intuitive support to the idea that memorized examples will show up with a large self-correlation $K(\rvz,\rvz)$.

\clearpage
\subsection{The loss kernel over Development.}\label{appendix:initialization}
To visualize how functional geometry emerges during learning, we compute the loss-correlation kernel at fixed training checkpoints and embed the induced distances \(d(\rvz,\rvz')=1-R(\rvz,\rvz')\) with the \emph{same} UMAP hyperparameters across time (and with same-label edges removed; see \cref{appendix:spurious-correlations}).

\Cref{fig:kernel_over_training} shows a coarse-to-fine trajectory. 
Bar a handful of curious outliers, at initialization the kernel is essentially structureless. We leave study of these outliers to future work. By step 710, a weak global anisotropy appears that roughly separates animate from inanimate classes. By step 1388, coherent clusters begin to form (e.g., \emph{dogs}). At step 3290, multiple subgroups sharpen and separate, and by step 5298 the geometry stabilizes into well-defined, semantically coherent regions that mirror the WordNet hierarchy. 

\begin{figure}
    \centering
    \includegraphics[width=\linewidth]{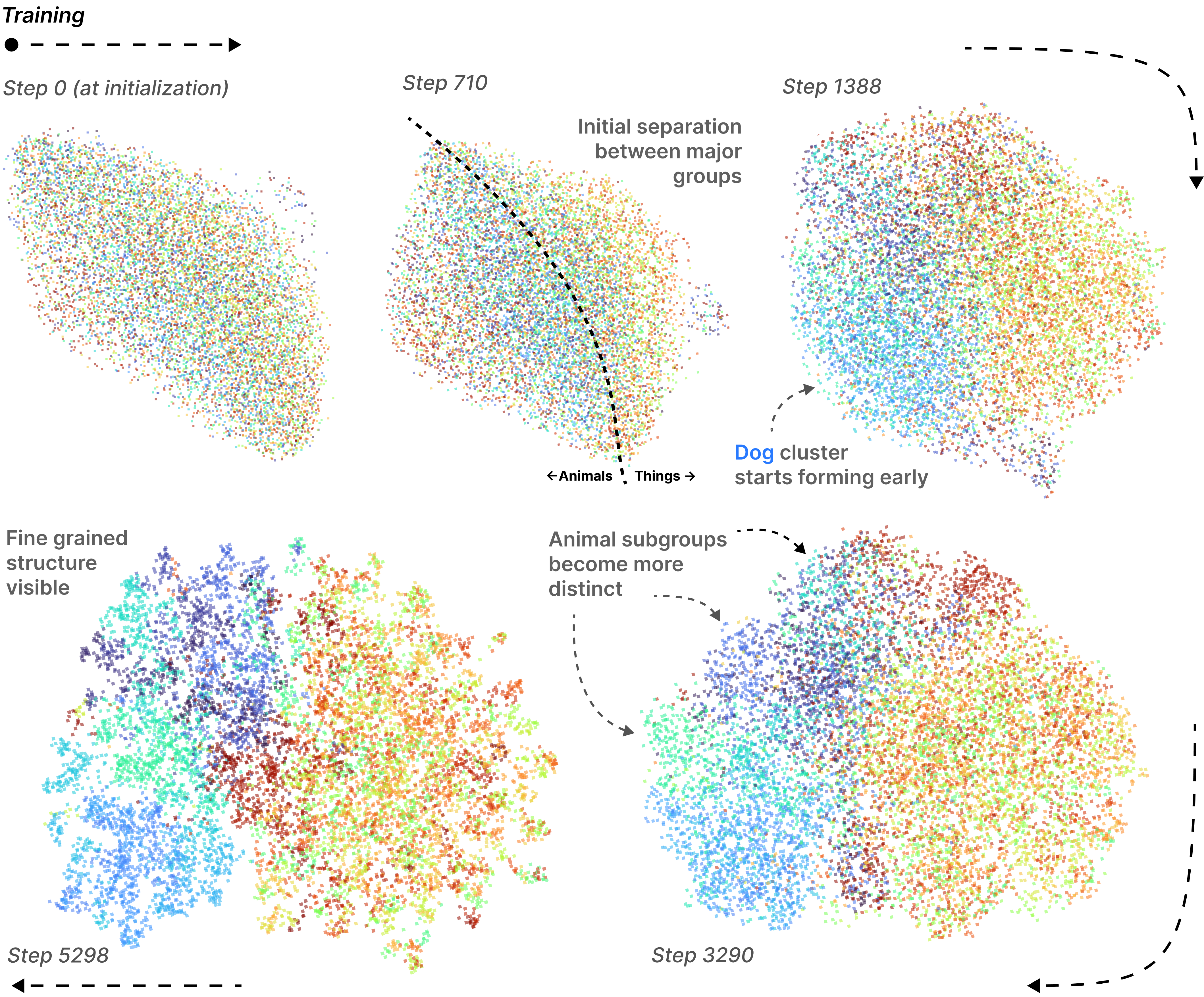}
    \caption{\textbf{Evolution of the kernel over training.} 
       UMAPs of the loss kernel taken at various steps over training, for an Inception-v1 model trained on ImageNet. Between \textbf{initialization} (top left) and \textbf{step 710} (top middle) the model begins to distinguish between animals and things -- A gradient of differentiation is established. At step 1388 (top right) significant structure is apparent, with \textit{Dogs} forming an early cluster. \textbf{Step 3290} (bottom right) sees many subgroups forming distinct clusters. By \textbf{step 5298} (bottom left) the kernel is fully formed. 
    }
    \label{fig:kernel_over_training}
\end{figure}

\begin{figure}[t!]
    \centering
    \includegraphics[width=\linewidth]{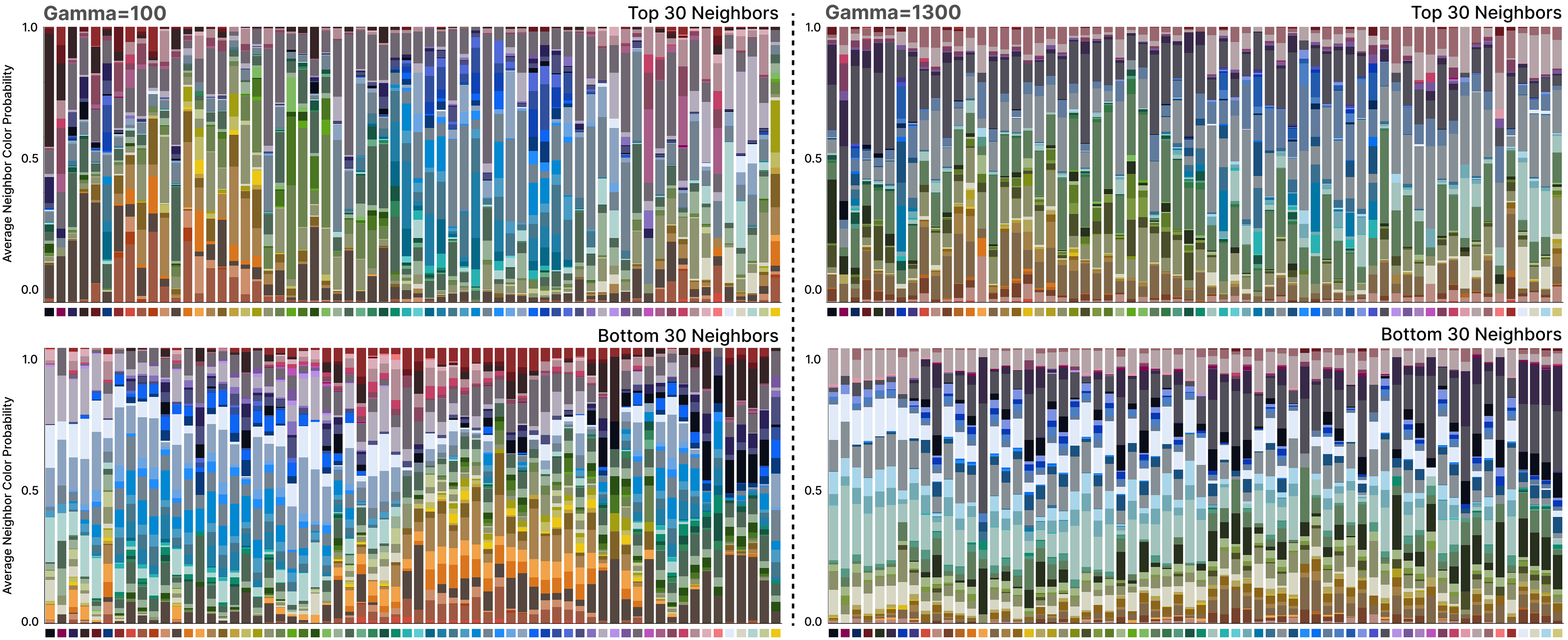}
    \caption{\textbf{Average color--neighbor probabilities, for low and high $\gamma$.}
        Stacked barchart versions of transition matrices, where a transition can be made from an input to its top (first row) or bottom (second row) 30 correlated inputs. The probability of a transitioning from an image ``close to color" $A$ on the $x$-axis to an image ``close to color" $B$ is given by the height of $B$'s bar in the stack. The right column shows the transition matrix obtained when using $\gamma=100$ during sampling, while the right shows the results for $\gamma=1300$. For $\gamma=100$ we see significant color striation in both rows, especially in the bottom correlated inputs (e.g. blue inputs have pronounced low correlation with orange inputs). Contrastingly patterns visible in $\gamma=1300$ are much more uniform. 
    }
    \label{fig:per_color_transition}
\end{figure}

\subsection{Quantifying Color Lift.}
\label{sec:color}
We describe our method for computing the average per-color lift as shown in \cref{fig:corr_vs_dist} and \cref{fig:per_color_transition}. In order to compute the lift we must bucket images into discrete color groups. To do so, for each input image, we compute
\begin{equation*}
\bmu_i = \left( \frac{1}{P} \sum_{p=1}^P R_{i,p},\; \frac{1}{P} \sum_{p=1}^P G_{i,p},\; \frac{1}{P} \sum_{p=1}^P B_{i,p} \right),
\end{equation*}
where $P$ is the number of pixels in image $i$, and $R_{i,p}$, $G_{i,p}$, $B_{i,p}$ are the red, green, and blue values of pixel $p$ in image $i$. (Equivalently, $\bmu_i = (\overline{R}_i,\, \overline{G}_i,\, \overline{B}_i)$, where each bar denotes the mean over all pixels in image $i$.)
2) Cluster the set $\{\bmu_i\}$ into $k$ groups using \textit{farthest point sampling} (FPS). FPS ensures that cluster centers are spread out over the uneven distribution of RGB means (e.g. many gray/brown tones).

\subsection{Extra ImageNet Examples}
\label{appendix:extra_im_examples}

We provide more examples of the top correlated inputs from the visualization experiment in \cref{sec:hierarchy} and \cref{fig:top_neighbors_qual}. These inputs were randomly selected in chunks of 10 from between the 600th and 700th inputs of the 2500 for which we computed the loss kernel. The full set top-correlated inputs for all 2500 inputs is available at \url{https://github.com/singfluence-anon/sf_imagenet_corrs}.

\begin{figure}[t]
    \centering
    \makebox[\textwidth][c]{%
        \includegraphics[width=1.2\textwidth]{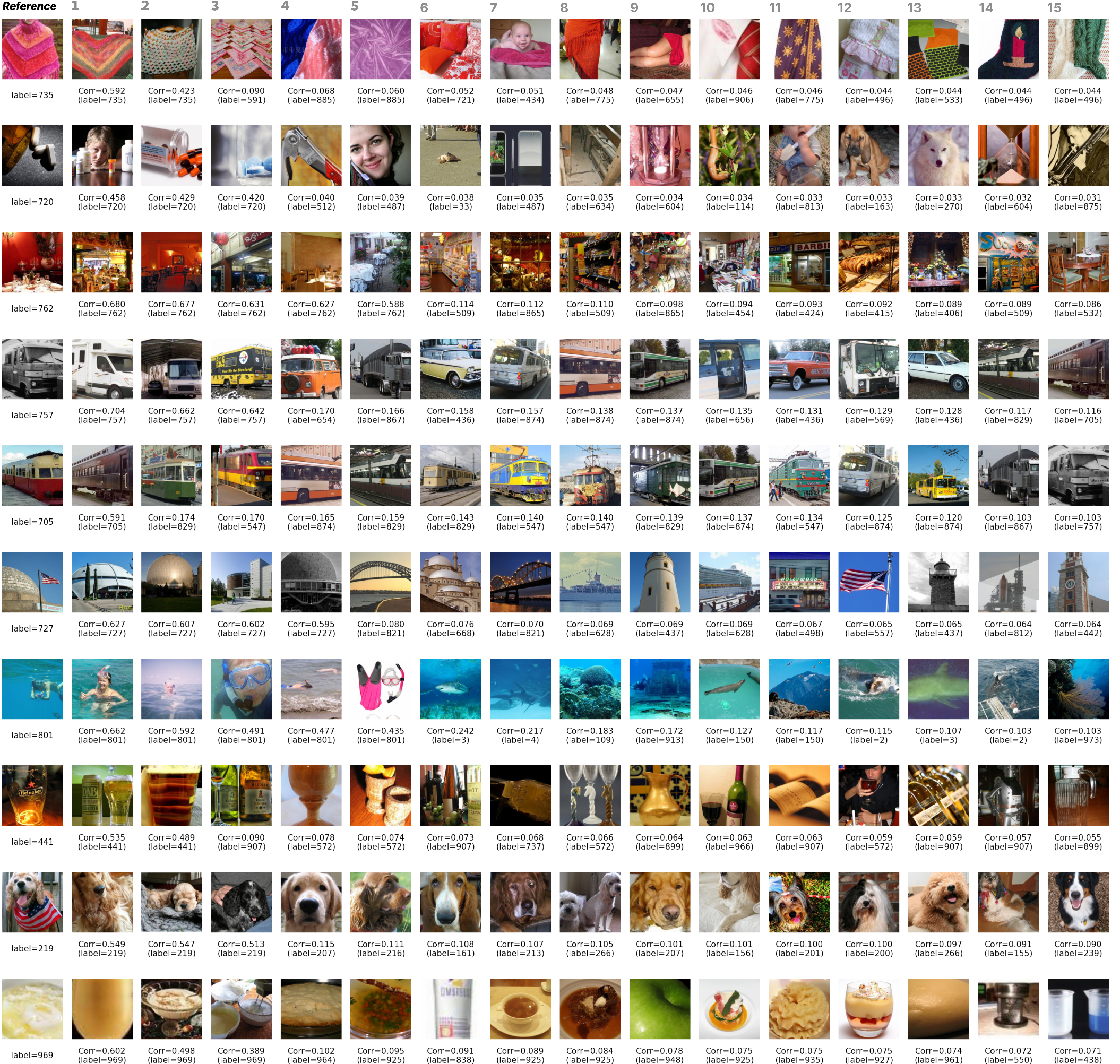}
    }
    \caption{
         \textbf{Top 15 correlated inputs with reference input} (randomly selected references). Reference images are the leftmost column.
    }
    \label{fig:extra_imagenet_2}
\end{figure}

\begin{figure}[ht]
    \centering
    \makebox[\textwidth][c]{%
        \includegraphics[width=1.2\textwidth]{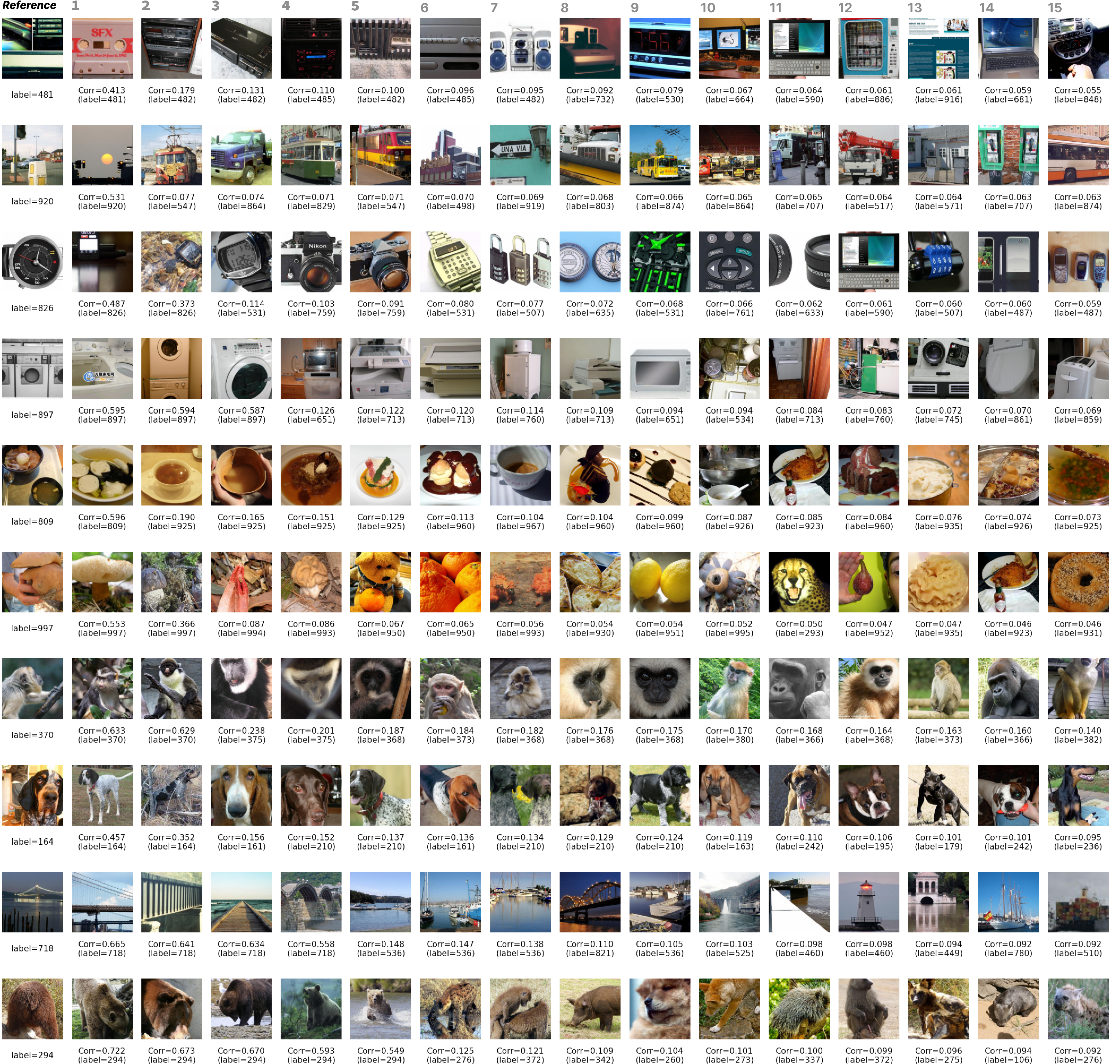}
    }
    \caption{
         \textbf{Top 15 correlated inputs with reference input} (randomly selected references). Reference images are the leftmost column.
    }
    \label{fig:extra_imagenet_3}
\end{figure}

\end{document}